\documentclass{article}

\usepackage[final,nonatbib]{neurips_2026}
\usepackage[numbers]{natbib}

\usepackage[page,header]{appendix} 
\usepackage{titletoc}              

\newcommand{\customsubsection}[1]{%
  \par
  \pagebreak[2]%
  \refstepcounter{subsection}%
    \everypar={%
      {\setbox0=\lastbox}
      \addcontentsline{toc}{subsection}{%
        {\protect\makebox[0.3in][r]{\thesubsubsection.} \hspace*{3pt}#1}}%
      \textbf{\thesubsection\space\space{#1}\space\newline}%
      \everypar={}%
    }%
  \ignorespaces
}

\usepackage{xcolor}
\usepackage{tcolorbox}
\tcbuselibrary{breakable,skins}

\newcommand{\corestep}[1]{%
\begingroup
\setlength{\fboxsep}{1.5pt}%
\colorbox{green!13}{\parbox{0.965\linewidth}{#1}}%
\endgroup
}

\newcommand{\redstep}[1]{%
\begingroup
\setlength{\fboxsep}{1.5pt}%
\colorbox{red!10}{\parbox{0.965\linewidth}{#1}}%
\endgroup
}

\newcommand{\neutralstep}[1]{%
\begingroup
\setlength{\fboxsep}{1.5pt}%
\colorbox{gray!7}{\parbox{0.965\linewidth}{#1}}%
\endgroup
}

\newtcolorbox{qualtracebox}[1]{
  enhanced,
  breakable,
  colback=white,
  colframe=gray!45,
  boxrule=0.35pt,
  arc=1pt,
  left=4pt,
  right=4pt,
  top=4pt,
  bottom=4pt,
  title={#1},
  fonttitle=\bfseries\small,
  colbacktitle=gray!12,
  coltitle=black,
  before skip=5pt,
  after skip=5pt
}

\newcommand{\quallegend}{%
\noindent
\colorbox{green!13}{\strut\hspace{0.8em}}~Minimal core step
\quad
\colorbox{red!10}{\strut\hspace{0.8em}}~Removed redundant step
\quad
\colorbox{gray!7}{\strut\hspace{0.8em}}~Input / answer metadata
}

\usepackage[utf8]{inputenc} 
\usepackage[T1]{fontenc}    
\usepackage{hyperref}       
\usepackage{url}            
\usepackage{booktabs}       
\usepackage{amsfonts}       
\usepackage{nicefrac}       
\usepackage{microtype}      
\usepackage{xcolor}         
\usepackage{enumitem}

\usepackage{algorithm}
\usepackage{algorithmic}
\usepackage{amsmath}
\usepackage{amssymb}
\usepackage{cleveref}
\usepackage{graphicx}

\usepackage{algorithm}
\usepackage{algorithmic}
\usepackage{amsmath}
\usepackage{amssymb}
\usepackage{amsthm}
\usepackage{cleveref}
\newcommand{\secondbest}[1]{\underline{#1}}

\theoremstyle{definition}

\usepackage{booktabs}
\usepackage{multirow}
\usepackage{makecell}
\usepackage{array}
\usepackage{graphicx}
\usepackage{xcolor}
\usepackage{colortbl}
\usepackage{subcaption}
\usepackage{pgfplots}
\pgfplotsset{compat=1.18}

\newcolumntype{L}[1]{>{\raggedright\arraybackslash}p{#1}}

\definecolor{coreblue}{RGB}{226,239,255}
\definecolor{coreblueDark}{RGB}{30,92,160}
\definecolor{coregray}{RGB}{245,245,245}
\definecolor{coreorange}{RGB}{224,120,57}
\definecolor{coregreen}{RGB}{50,145,90}

\newcommand{\corecell}[1]{\cellcolor{coreblue}#1}
\newcommand{\corecellbf}[1]{\cellcolor{coreblue}\textbf{#1}}

\pgfplotsset{
    greedyplot/.style={coreblueDark, very thick, mark=*, mark size=1.7pt},
    necessityplot/.style={coreorange, thick, dashed, mark=square*, mark size=1.5pt},
    randomplot/.style={black!45, thick, dotted, mark=triangle*, mark size=1.7pt},
    densegrid/.style={
        grid=both,
        grid style={black!7},
        tick label style={font=\scriptsize},
        label style={font=\scriptsize},
        title style={font=\scriptsize}
    }
}

\usepackage[table]{xcolor}

\usepackage{booktabs}
\usepackage{multirow}
\usepackage{makecell}
\usepackage{adjustbox}
\usepackage{bbm}

\tcbuselibrary{theorems,breakable,skins}

\usepackage{tikz}
\usetikzlibrary{calc}
\usepackage{pgfplots}
\pgfplotsset{compat=1.18}
\usepgfplotslibrary{fillbetween,groupplots,colormaps}

\tcbset{
  theoremstyle/.style={
    enhanced,
    breakable,
    colback=blue!2,
    colframe=blue!35!black,
    coltitle=white,
    colbacktitle=blue!55!black,
    boxrule=0.5pt,
    arc=1.5pt,
    left=4pt,
    right=4pt,
    top=4pt,
    bottom=4pt,
    before skip=6pt,
    after skip=6pt,
    fonttitle=\bfseries
  },
  proofstyle/.style={
    enhanced,
    breakable,
    colback=gray!4,
    colframe=gray!40,
    boxrule=0.4pt,
    arc=1.5pt,
    left=4pt,
    right=4pt,
    top=4pt,
    bottom=4pt,
    before skip=4pt,
    after skip=6pt
  },
  defstyle/.style={
    enhanced,
    breakable,
    colback=blue!1,
    colframe=blue!25!black,
    coltitle=white,
    colbacktitle=blue!45!black,
    boxrule=0.35pt,
    arc=1pt,
    left=4pt,
    right=4pt,
    top=3pt,
    bottom=3pt,
    before skip=4pt,
    after skip=4pt,
    fonttitle=\bfseries\small
  }
}

\newtcbtheorem[number within=section]{theorem}{Theorem}{theoremstyle}{thm}
\newtcbtheorem[number within=section]{lemma}{Lemma}{theoremstyle}{lem}
\newtcbtheorem[number within=section]{proposition}{Proposition}{theoremstyle}{prop}
\newtcbtheorem[number within=section]{definitionbox}{Definition}{defstyle}{defn}

\newcommand{\stdgreen}[1]{\ensuremath{{\color{green!45!black}\scriptstyle \pm #1}}}

\newtcolorbox{appthmbox}[1]{
  theoremstyle,
  title={Theorem: #1}
}

\newtcolorbox{applembox}[1]{
  theoremstyle,
  title={Lemma: #1}
}

\newtcolorbox{apppropbox}[1]{
  theoremstyle,
  title={Proposition: #1}
}

\newtcolorbox{compactdefbox}{
  enhanced,
  breakable,
  colback=blue!2,
  colframe=blue!25!black,
  boxrule=0.35pt,
  arc=1pt,
  left=4pt,
  right=4pt,
  top=3pt,
  bottom=3pt,
  before skip=4pt,
  after skip=5pt
}

\newtcolorbox{takeawaybox}{
  enhanced,
  breakable,
  colback=gray!4,
  colframe=gray!35,
  boxrule=0.35pt,
  arc=1pt,
  left=4pt,
  right=4pt,
  top=2pt,
  bottom=2pt,
  before skip=4pt,
  after skip=4pt
}

\newtcolorbox{apptakeawaybox}{
  enhanced,
  breakable,
  colback=gray!4,
  colframe=gray!35,
  boxrule=0.30pt,
  arc=1pt,
  left=4pt,
  right=4pt,
  top=2pt,
  bottom=2pt,
  before skip=4pt,
  after skip=4pt
}

\providecommand{\corecell}[1]{\cellcolor{blue!10}{#1}}
\providecommand{\corecellbf}[1]{\cellcolor{blue!15}{\textbf{#1}}}
\providecommand{\appsmalltable}{\small\setlength{\tabcolsep}{4pt}\renewcommand{\arraystretch}{1.04}}

\title{Uncovering the Representation Geometry of Minimal Cores in Overcomplete Reasoning Traces}

\vspace{-18pt}
\author{%
  Sanjoy Chowdhury\thanks{Work done while the author was at UMD} \qquad   Dinesh Manocha \\
  University of Maryland, College Park, USA\\
  \texttt{\{sanjoyc, dmanocha\} @umd.edu} \\
}

\makeatletter
\providecommand{\@trackname}{}
\makeatother
\begin{document}

\maketitle
\begin{abstract}
\vspace{-4pt}
Language models often generate long chain-of-thought traces, but it remains unclear how much of this reasoning is necessary for preserving the final prediction. We study this through the lens of \emph{overcomplete reasoning traces}: generated traces that contain more intermediate steps than are needed to support the model’s answer. We define the \emph{minimal core} as the smallest subset of steps that preserves either the final answer or predictive distribution, and introduce metrics for compression ratio, redundancy mass, step necessity, and necessity concentration. Across six deliberative reasoning benchmarks spanning arithmetic, competition mathematics, expert scientific reasoning, and commonsense multi-hop QA, we find substantial overcompleteness: on average, \textbf{46\%} of steps are removable under greedy minimal-core extraction while preserving the original answer in \textbf{86\%} of cases. We also find that predictive support is concentrated: the top three steps account for \textbf{65\%} of measured necessity mass on average. Beyond compression, minimal cores expose a cleaner geometry of reasoning: compared with full traces, they improve correct--incorrect trace separation by \textbf{11 points}, reduce estimated intrinsic dimensionality by \textbf{34\%}, and transfer across model families with \textbf{85\%} off-diagonal answer retention. Theoretically, we establish existence of minimal sufficient subsets, local irreducibility guarantees for greedy elimination, and certificates of overcompleteness and sparse necessity. Together, these results suggest that full reasoning traces are often verbose and overcomplete, while minimal cores isolate the effective support underlying language-model predictions.
\end{abstract}

\section{Introduction}
\label{sec:introduction}
\vspace{-4pt}

Language models increasingly use explicit intermediate reasoning traces such as chain-of-thought derivations, plan-style decompositions, and step-by-step explanations to solve complex reasoning problems~\citep{wei2022chain,kojima2022large,zhou2023least,yao2023tree}. These traces are now central to prompting, verification, process supervision, and evaluation~\citep{wang2023selfconsistency,lightman2023lets}. Yet a generated trace is not necessarily a minimal explanation \cite{su2025between, eisenstadt2025overclocking}: it may include restatements, paraphrases, stylistic justifications, redundant checks, or alternative routes that do not affect the final prediction. Fig.~\ref{fig:teaser} summarizes our view: full traces often contain a compact predictive core surrounded by removable reasoning mass.

\begin{figure*}[t]
    \centering
    \includegraphics[width=\columnwidth]{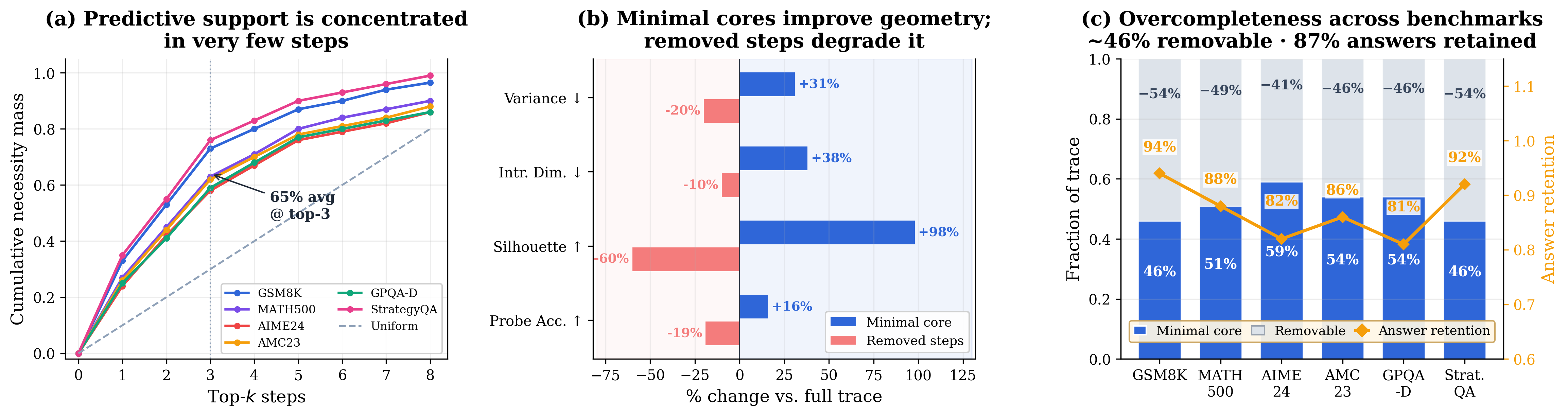}
    \caption{\textbf{LLM reasoning traces are substantially overcomplete.}
Generated traces contain a sparse predictive core surrounded by removable reasoning mass. Minimal-core extraction separates answer-supporting steps from redundant elaboration and exposes a compact, transferable representation geometry of reasoning.}
    \label{fig:teaser}   
    \vspace{-10pt}
\end{figure*}

This raises a simple question: \emph{how much of a generated reasoning trace is actually needed?} Prior work has largely focused on eliciting better reasoning \cite{feng2025efficient, qu2025survey, ma2025reasoning, hassid2025don}, supervising intermediate steps \cite{xu2025logicreward, ma2025general, jung2025code, zhao2025reason}, improving self-consistency \cite{wan2025reasoning, zhou2025theoretical, taubenfeld2025confidence, zhang2025consistent}, or training models to reason more efficiently~\citep{wei2022chain,wang2023selfconsistency,lightman2023lets,zelikman2022star}. Recent efficient-reasoning methods further show that shorter traces can sometimes preserve or improve accuracy through curated data, reinforcement learning, test-time scaling \cite{tian2025think, teng2025atom, yang2025towards, snell2025scaling}, or trimming objectives~\citep{ye2025limo,muennighoff2025s1,snell2024scaling,wang2026trims,chen2025surgical}. These directions are complementary to ours: rather than making a model generate shorter traces, we analyze traces that have already been generated and ask which steps are behaviorally sufficient for preserving the original answer or predictive distribution.

We study this problem through the lens of \emph{overcomplete reasoning traces}. A trace is overcomplete if it contains a proper subset of steps that preserves the model's predictive behavior. We define the \emph{minimal core} as the smallest such sufficient subset. This turns trace analysis into a subset-sufficiency problem and induces quantitative measures of compression ratio, redundancy mass, step necessity, and necessity concentration. Unlike human-rationale extraction, our goal is not to recover a canonical explanation; instead, we characterize the effective support of a model prediction under explicit reasoning context.

Minimal cores are useful for both interpretation and evaluation. Accuracy indicates whether the final answer is correct, but not which parts of a trace support that answer. Full-trace evaluation can conflate essential reasoning with removable elaboration. Minimal cores instead expose the \emph{effective support} of a prediction, allowing us to test whether reasoning utility is distributed across many steps or concentrated in a sparse set of high-necessity steps.

Beyond compression, we ask whether minimal cores reveal a cleaner \emph{representation geometry} of reasoning, by which we mean the compactness, correct--incorrect separability, intrinsic dimensionality, and transfer behavior of trace embeddings. Representation-analysis work shows that neural embeddings can encode linguistic structure, task information, truthfulness, uncertainty, and latent decision states~\citep{kornblith2019similarity,hewitt2019structural,belinkov2022probing,aghajanyan2021intrinsic,li2023inference,marks2023geometry}. We extend this perspective to generated reasoning traces: if full traces mix answer-supporting reasoning with redundant stylistic variation, then minimal-core embeddings should be more compact, lower-dimensional, and more informative about correctness than full-trace embeddings.

We evaluate this framework on six deliberative reasoning benchmarks spanning arithmetic, competition mathematics, expert scientific reasoning, and commonsense multi-hop QA: GSM8K~\citep{cobbe2021gsm8k}, MATH500~\citep{hendrycks2021measuring,lightman2023lets}, AIME24, AMC23, GPQA-Diamond~\citep{rein2024gpqa}, and StrategyQA~\citep{geva2021strategyqa}. Across model families, greedy minimal-core extraction removes nearly half of generated steps while largely preserving the full-trace answer. Predictive support is concentrated in a small number of steps, minimal cores transfer across model families, and their embeddings are more compact and correctness-informative than full-trace or removed-step embeddings. We make the following key contributions:

\begin{itemize}[leftmargin=1.2em, labelsep=0.45em, itemsep=0pt, topsep=2pt, parsep=0pt, partopsep=0pt]
    \item \textbf{\textcolor{blue}{Trace-level minimal cores.}} We study minimal sufficiency over generated reasoning traces, defining minimal cores as the effective support of a prediction.
    \item \textbf{\textcolor{blue}{Necessity and overcompleteness.}} We quantify removable reasoning through redundancy mass, step necessity, and necessity concentration.
    \item \textbf{\textcolor{blue}{Guarantees for core extraction.}} We establish guarantees for existence, local minimality under greedy deletion, overcompleteness certificates, and sparse-necessity structure.
    \item \textbf{\textcolor{blue}{Transferable and geometric support.}} We show that minimal cores transfer across model families and occupy more compact, correctness-informative regions of representation space.
\end{itemize}

\section{Minimal Cores in Overcomplete Reasoning Traces}
\label{sec:minimal-cores}
\vspace{-4pt}

We view generated reasoning traces as structured objects containing both answer-supporting steps and redundant elaboration. A full chain-of-thought trace may include restatements, paraphrases, checks, and stylistic explanations that improve readability but are not necessarily required for preserving the model's prediction. We formalize this by defining the \emph{minimal core}: a smallest subset of generated steps that preserves the predictive behavior of the original trace.

\customsubsection{Problem Setup}
\label{subsec:problem-setup}

Let $x$ be the input and $M$ a language model. Under a prompting strategy such as chain-of-thought or plan-then-solve, the model produces a trace followed by a final answer. We write the trace as $z=(r_1,\ldots,r_T)$, where each $r_t$ is a textual reasoning unit obtained from numbered model outputs or a fixed deterministic segmentation rule. For any subset $S\subseteq[T]$, let $z_S=(r_t:t\in S)$ denote the retained subsequence in original order. Conditioning on $x$ and $z_S$, the model induces $p_M(\cdot\mid x,z_S)$ and prediction
\begin{equation}
    \hat{y}_M(x,z_S)=\arg\max_y p_M(y\mid x,z_S),
\end{equation}
with full-trace prediction $\hat{y}_M(x,z_{[T]})$. Our goal is behavioral rather than mechanistic: we do not claim that deleting written steps reveals the model's exact internal computation. Instead, we ask which generated steps are \emph{functionally necessary} for preserving the prediction when the trace is used as explicit reasoning context, giving a post-hoc way to analyze effective support without human rationales or retraining.

\customsubsection{Answer- and Distribution-Preserving Sufficiency}
\label{subsec:sufficiency}

A retained subset can be sufficient at different strictness levels. \textbf{Answer-preserving sufficiency} requires only the same final prediction: $S\subseteq[T]$ is \emph{answer-sufficient} for trace $z$ on input $x$ if $\hat{y}_M(x,z_S)=\hat{y}_M(x,z_{[T]})$. This applies even to black-box models, but it is coarse: two subsets may share the same top-1 answer while inducing different confidence profiles. \textbf{Distribution-preserving sufficiency} is stricter: given divergence $D$ and tolerance $\varepsilon\ge0$, $S$ is \emph{distribution-sufficient} if
\begin{equation}
    D\!\left(p_M(\cdot\mid x,z_S),p_M(\cdot\mid x,z_{[T]})\right)\le\varepsilon .
\end{equation}
We write $\mathsf{Suff}(S;x,z,M)\in\{0,1\}$ for a generic sufficiency predicate instantiated by either answer preservation or distribution preservation.

\customsubsection{Overcompleteness, Minimal Cores, and Step Necessity}
\label{subsec:overcompleteness}

\textbf{Minimal core.}
Given $z=(r_1,\ldots,r_T)$, a subset $S^\star\subseteq[T]$ is a \emph{minimal core} if
\begin{equation}
    S^\star\in\arg\min_{S\subseteq[T]} |S|
    \quad \text{s.t.}\quad
    \mathsf{Suff}(S;x,z,M)=1 .
    \label{eq:minimal-core}
\end{equation}
The corresponding minimal-core trace is $z^\star=z_{S^\star}$. A minimal core is the smallest retained subset preserving the chosen sufficiency criterion. It need not be unique: multiple subsets may preserve the same prediction. Our goal is therefore not to recover a canonical human rationale, but to characterize the size, necessity structure, and geometry of sufficient cores.

\textbf{Overcomplete reasoning trace.}
A reasoning trace $z$ is \emph{overcomplete} if there exists a proper subset $S\subset[T]$ such that $\mathsf{Suff}(S;x,z,M)=1$. We measure its degree using compression ratio and redundancy mass,
\begin{equation}
    \mathrm{CR}(z)=\frac{|S^\star|}{T},
    \qquad
    \mathrm{RM}(z)=1-\frac{|S^\star|}{T}.
\end{equation}
Low $\mathrm{CR}$ and high $\mathrm{RM}$ indicate that much of the trace is removable.

\begin{takeawaybox}
\noindent\textbf{Takeaway.}
Minimal cores turn reasoning-trace analysis into a subset-sufficiency problem: full traces may be verbose objects, while minimal cores expose the smallest answer-supporting subset under a specified behavioral criterion.
\end{takeawaybox}

To identify which steps matter, let $\ell(x,z_S)$ be a task loss, such as the negative log-likelihood of the full-trace answer under retained subset $z_S$. The leave-one-out necessity of step $r_t$ is
\begin{equation}
    \Delta_t(z)=
    \ell(x,z_{[T]\setminus\{t\}})-\ell(x,z_{[T]}).
    \label{eq:necessity}
\end{equation}
Large positive $\Delta_t$ means deleting $r_t$ is harmful; near-zero $\Delta_t$ suggests that the step is individually redundant under the chosen loss. This score measures \emph{marginal} necessity rather than total causal contribution: duplicate or substitutable reasoning steps may each receive low leave-one-out necessity even if the group collectively supports the answer. We therefore interpret $\Delta_t$ as unique marginal support and complement it with subset-level sufficiency tests.

When likelihoods are unavailable, we use a continuous proxy rather than the exact answer-flip indicator for ranking admissible deletions. Specifically, $h_t(S)$ is instantiated by a verifier score, answer confidence, or margin-based score when available; exact answer preservation is used only as the sufficiency constraint. This avoids the degenerate case in which every admissible deletion has zero answer-flip harm.

We normalize positive necessity scores as
\begin{equation}
    w_t=
    \frac{\max(\Delta_t,0)}
    {\sum_{j=1}^{T}\max(\Delta_j,0)+\eta},
    \qquad
    \mathrm{NMass}_k(z)=
    \sum_{t\in\mathrm{TopK}(w,k)} w_t ,
    \label{eq:necessity-weight}
\end{equation}
where $\eta=10^{-8}$ unless otherwise specified. High top-$k$ mass indicates sparse \emph{marginal} necessity: a small set of steps accounts for most measured unique predictive support.

\customsubsection{Extracting Minimal Cores}
\label{subsec:extracting-cores}

Solving Eq.~\eqref{eq:minimal-core} exactly requires $2^T$ subset evaluations. We therefore use practical approximations based on repeated sufficiency checks.

\paragraph{Greedy backward elimination.}
Our primary method starts from the full trace and repeatedly removes a step when doing so preserves sufficiency. At each iteration, it evaluates candidate single-step deletions and removes the admissible step with smallest estimated harm. It terminates when no remaining single-step deletion preserves sufficiency. The result is not guaranteed to be globally minimal, but is locally minimal with respect to one-step deletions.

Here $h_t(S)$ denotes the estimated harm of deleting step $t$ from the current retained subset $S$. When likelihoods are available, we use the loss increase induced by deletion. In black-box settings, $h_t(S)$ is a verifier, confidence, or margin-based score used only to rank admissible deletions; the sufficiency predicate itself still determines whether a deletion is allowed. Ties are broken deterministically by the original step index.

\begin{algorithm}[t]
\caption{Greedy Backward Elimination for Minimal-Core Extraction}
\label{alg:greedy-core}
\begin{algorithmic}[1]
\REQUIRE Input $x$; trace $z=(r_1,\ldots,r_T)$; model $M$; predicate $\mathsf{Suff}$; harm score $h_t(S)$.
\ENSURE Irreducible sufficient subset $S$.
\STATE $S \leftarrow [T]$ \hfill $\triangleright$ Start from full trace.
\STATE $\mathcal{A}(S) \leftarrow \{t\in S:\mathsf{Suff}(S\setminus\{t\};x,z,M)=1\}$ \hfill $\triangleright$ Admissible deletions.
\WHILE{$\mathcal{A}(S)\neq\emptyset$}
    \STATE $t^\dagger \in \arg\min_{t \in A(S)} \big(h_t(S), t\big)$ \hfill $\triangleright$ Least harmful; deterministic tie-break.
    \STATE $S \leftarrow S\setminus\{t^\dagger\}$ \hfill $\triangleright$ Delete one sufficient-preserving step.
    \STATE $\mathcal{A}(S) \leftarrow \{t\in S:\mathsf{Suff}(S\setminus\{t\};x,z,M)=1\}$ \hfill $\triangleright$ Recompute after interaction changes.
\ENDWHILE
\RETURN $S$ \hfill $\triangleright$ Locally irreducible sufficient subset.
\end{algorithmic}
\end{algorithm}

\vspace{-6pt}
\paragraph{Necessity-guided pruning.}
As a cheaper alternative, we estimate $\Delta_t$ once using Eq.~\eqref{eq:necessity}, rank steps by increasing necessity, and remove low-necessity steps first. This is less adaptive because it does not recompute interactions after each deletion, but it scales well for large benchmark sweeps.

\vspace{-6pt}
\paragraph{Random deletion baseline.}
We compare against random deletion at matched removal rates. Since long traces may tolerate some deletion by chance, a meaningful minimal-core procedure should preserve answers substantially better than random deletion at the same compression level.

\vspace{-6pt}
\paragraph{Complexity.}
For trace length $T$, exhaustive search requires $O(2^T)$ sufficiency checks. Greedy backward elimination requires at most $O(T^2)$ checks, because each round considers all remaining steps and at most $T$ deletions are accepted. Necessity-guided pruning requires $O(T)$ initial deletion evaluations plus sufficiency checks along the pruning path. In practice, greedy elimination is feasible for typical chain-of-thought traces, while necessity-guided pruning is useful for longer traces and larger sweeps.

Together, greedy deletion, necessity-guided pruning, and random deletion separate three effects: adaptive sufficient-core extraction, scalable necessity-based approximation, and the baseline compressibility expected from trace length alone.

\section{Formal Guarantees and Representation Geometry}
\label{sec:theory-geometry}
\vspace{-2pt}

We support minimal-core analysis in two ways: by proving sufficiency and irreducibility guarantees for extraction, and by testing whether minimal cores form more compact, correctness-informative representations than full traces or removed steps.

\customsubsection{Existence, Greedy Extraction, and Certificates}
\label{subsec:theory-certificates}

Let $z=(r_1,\ldots,r_T)$ be a finite trace, $[T]=\{1,\ldots,T\}$, and $z_S$ the subsequence indexed by $S\subseteq[T]$ in original order. Let $\mathsf{Suff}(S;x,z,M)\in\{0,1\}$ denote an answer- or distribution-preserving sufficiency predicate for model $M$ on input $x$. We write $S^\star$ for a minimum-cardinality sufficient subset, when it exists, and $z^\star=z_{S^\star}$ for its trace.

\begin{theorem}{Existence of minimal cores}{existence}
If the full trace is sufficient for itself, $\mathsf{Suff}([T];x,z,M)=1$, then there exists at least one minimal core $S^\star\subseteq[T]$.
\end{theorem}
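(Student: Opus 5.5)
The plan is a finite well-ordering argument. Consider the family of sufficient subsets
\[
\mathcal{F}=\{S\subseteq[T]:\mathsf{Suff}(S;x,z,M)=1\}.
\]
By hypothesis $[T]\in\mathcal{F}$, so $\mathcal{F}$ is nonempty. Since $[T]$ is finite, $\mathcal{F}\subseteq 2^{[T]}$ is a finite set with at most $2^T$ elements.

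Next consider the set of cardinalities $\{|S|:S\in\mathcal{F}\}$. This is a nonempty subset of $\{0,1,\ldots,T\}$, so by the well-ordering of the natural numbers it has a least element $m$. Choose any $S^\star\in\mathcal{F}$ with $|S^\star|=m$. Then $\mathsf{Suff}(S^\star;x,z,M)=1$, and $|S^\star|\le|S|$ for every sufficient $S$. Hence $S^\star$ lies in the $\arg\min$ of Eq.~\eqref{eq:minimal-core}, which is exactly the definition of a minimal core. This also gives $|S^\star|\le T$, so $\mathrm{CR}(z)$ and $\mathrm{RM}(z)$ are well defined.

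There is no genuine obstacle. The only point needing care is that the argument uses nothing about the predicate beyond its being $\{0,1\}$-valued on a finite domain. It therefore applies uniformly to the answer-preserving and the distribution-preserving instantiations (for any $D$ and $\varepsilon\ge0$), and to possibly empty $S^\star$ when the model's prediction from $x$ alone already matches. I would also remark that the hypothesis $\mathsf{Suff}([T])=1$ is automatic in both instantiations, because $\hat y_M(x,z_{[T]})$ trivially equals itself and $D(p,p)=0\le\varepsilon$ for any divergence. Finally, uniqueness is not claimed, consistent with the remark after Eq.~\eqref{eq:minimal-core}.
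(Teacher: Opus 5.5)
Your proof is correct and is essentially the paper's argument: $\mathcal{F}$ is nonempty because $[T]\in\mathcal{F}$ and finite because $2^{[T]}$ is finite, so some sufficient subset of minimum cardinality exists and is a minimal core. Your extra remark that the hypothesis $\mathsf{Suff}([T];x,z,M)=1$ holds automatically under both the answer- and distribution-preserving instantiations is also right, whereas the paper simply keeps it as a stated assumption.
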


\noindent\textit{Proof sketch.}
The sufficient family $\mathcal{F}=\{S\subseteq[T]:\mathsf{Suff}(S;x,z,M)=1\}$ is nonempty since $[T]\in\mathcal{F}$ and finite since $[T]$ is finite. Hence some $S^\star\in\mathcal{F}$ attains minimum cardinality. Full proof in Appendix~\ref{app:proofs}. \hfill$\square$

\paragraph{Irreducibility.}
A sufficient subset $S$ is \emph{irreducible} if no retained step can be removed while preserving sufficiency:
\begin{equation}
    \forall t\in S,\qquad
    \mathsf{Suff}(S\setminus\{t\};x,z,M)=0 .
\end{equation}
Every globally minimal core is irreducible, but the converse need not hold: a locally irreducible subset may still be larger than another sufficient subset reachable only through a multi-step change.

\begin{theorem}{Local irreducibility of greedy deletion}{local-minimality}
Assume the full trace is sufficient and greedy backward elimination deletes a step only when the resulting subset remains sufficient. Then the algorithm terminates and returns an irreducible sufficient subset $\widehat{S}$.
\end{theorem}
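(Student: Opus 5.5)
The argument rests on two invariants of Algorithm~\ref{alg:greedy-core}, established in order: first that the current set $S$ is always sufficient, and second that $|S|$ strictly decreases with each iteration of the while loop. Irreducibility at termination then follows directly from the loop's exit condition.

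\textbf{Invariant (sufficiency).} By induction on the number of completed iterations. At initialization $S=[T]$, and $\mathsf{Suff}([T];x,z,M)=1$ by hypothesis. Suppose $S$ is sufficient at the start of an iteration. The loop only runs when $\mathcal{A}(S)\neq\emptyset$, and the chosen $t^\dagger$ lies in $\mathcal{A}(S)$. By the definition of $\mathcal{A}(S)$, this means $\mathsf{Suff}(S\setminus\{t^\dagger\};x,z,M)=1$. Hence the updated set is sufficient. The choice of harm score $h_t(S)$ and the tie-break only affect \emph{which} admissible index is chosen, not admissibility. This is why the argument is agnostic to whether $h$ is a likelihood loss or a black-box proxy. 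The only thing to note is that the $\arg\min$ over the finite nonempty set $\mathcal{A}(S)$, with lexicographic tie-break on the index, is well defined and yields a unique $t^\dagger$.

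\textbf{Termination.} Each iteration removes exactly one element $t^\dagger\in S$, so $|S|$ decreases by one. Since $|S|\ge 0$ and starts at $T$, the loop executes at most $T$ times. The same count gives the $O(T^2)$ bound on sufficiency checks quoted earlier: there are at most $T$ rounds with at most $T$ checks each. Recomputing $\mathcal{A}(S)$ is a finite computation of at most $|S|$ predicate evaluations, each assumed to return a value in $\{0,1\}$. So every step of the loop halts.

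\textbf{Irreducibility at exit.} When the loop stops, $\mathcal{A}(\widehat{S})=\emptyset$. Unfolding the definition, this says $\mathsf{Suff}(\widehat{S}\setminus\{t\};x,z,M)=0$ for every $t\in\widehat{S}$. This is exactly the irreducibility condition. Combined with the sufficiency invariant, $\widehat{S}$ is an irreducible sufficient subset.

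The main (mild) obstacle is making explicit that $\mathsf{Suff}$ is a deterministic function of $S$. This holds for fixed $x,z,M$, with greedy decoding and a fixed divergence threshold. Without it, the recomputed $\mathcal{A}(S)$ at exit might not certify irreducibility. I would state this determinism as a standing assumption. Note also that nothing requires monotonicity of $\mathsf{Suff}$ under subsets, so the result is only local irreducibility, not global minimality in the sense of Theorem~\ref{thm:existence}.
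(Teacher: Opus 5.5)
Your proposal is correct and follows the paper's proof essentially step for step. Sufficiency is preserved by induction from $S_0=[T]$ because only admissible deletions are accepted, termination follows since each accepted deletion lowers $|S|$ by one, and irreducibility holds because any $t\in\widehat{S}$ with $\mathsf{Suff}(\widehat{S}\setminus\{t\};x,z,M)=1$ would still be admissible at the exit test.

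Stating explicitly that $\mathsf{Suff}$ is a deterministic $\{0,1\}$-valued function of $S$ is a sensible addition that the paper leaves implicit. Your closing remark, however, slightly misplaces why global minimality can fail: it fails even when $\mathsf{Suff}$ is monotone under supersets. For example, take the sufficient sets to be all supersets of $\{1,2\}$ or $\{3\}$; greedy may delete $3$ first and stop at $\{1,2\}$, although $\{3\}$ is smaller.
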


\noindent\textit{Proof sketch.}
Each accepted deletion preserves sufficiency and strictly decreases $|S|$, so the algorithm terminates after at most $T$ accepted deletions. At termination, no remaining step can be deleted while preserving sufficiency; otherwise it would be admissible. Full proof in Appendix~\ref{app:proofs}. \hfill$\square$

\paragraph{Overcompleteness certificates.}
Observed sufficient deletions certify overcompleteness. If some $t\in[T]$ satisfies $\mathsf{Suff}([T]\setminus\{t\};x,z,M)=1$, then $z$ is overcomplete. More generally, let $K\subseteq[T]$ be a deletion set with $|K|=k$. If $\mathsf{Suff}([T]\setminus K;x,z,M)=1$, then any minimal core satisfies
\[
    \mathrm{CR}(z)=\frac{|S^\star|}{T}\leq \frac{T-k}{T},
    \qquad
    \mathrm{RM}(z)=1-\mathrm{CR}(z)\geq\frac{k}{T}.
\]
Thus, every sufficient deletion set upper-bounds compression ratio and lower-bounds removable trace mass.

\paragraph{Sparse marginal necessity certificate.}
Recall the leave-one-out necessity score and normalized weight
\[
\Delta_t(z)=\ell(x,z_{[T]\setminus\{t\}})-\ell(x,z_{[T]}),
\qquad
w_t=\frac{\max(\Delta_t,0)}
{\sum_{j=1}^{T}\max(\Delta_j,0)+\eta},
\]
where $\ell(x,z_S)$ is a task loss, e.g., the negative log-likelihood of the full-trace answer under retained trace $z_S$, and $\eta>0$ is a smoothing constant. If $C\subseteq[T]$ satisfies $\sum_{t\in C}w_t\geq1-\gamma$ for $\gamma\in[0,1]$, then
\[
    \sum_{t\notin C}w_t
    =
    \sum_{t=1}^{T}w_t-\sum_{t\in C}w_t
    \leq \gamma .
\]
When $|C|\ll T$ and $\gamma$ is small, the trace has sparse \emph{marginal} necessity support, motivating top-$k$ necessity mass.

\begin{takeawaybox}
\noindent\textbf{Takeaway.}
Minimal cores exist for finite sufficient traces; greedy deletion gives a locally irreducible sufficient subset, and sufficient deletions certify overcompleteness while concentrated necessity certifies sparse predictive support.
\end{takeawaybox}

\customsubsection{Trace, Core, and Removed-Step Geometry}
\label{subsec:trace-core-geometry}

The subset results characterize which steps are sufficient. We next study representation geometry: the organization of trace embeddings in terms of compactness, correct--incorrect separability, intrinsic dimensionality, and alignment among full traces, cores, and removed steps. Let $\phi(r_t)\in\mathbb{R}^d$ be a $d$-dimensional step embedding, computed in experiments by mean-pooling token hidden states from a fixed decoder layer over step $r_t$; unless stated otherwise, we use the final transformer layer of the open-weight generator. For API-only models, we use a sentence encoder. Define the full-trace embedding $h(z)$, minimal-core embedding $h(z^\star)$, and removed-step embedding $h(z^-)$ as
\begin{equation}
\begin{aligned}
    h(z) &=
    \frac{1}{T}\sum_{t=1}^{T}\phi(r_t), \\
    h(z^\star) &=
    \begin{cases}
    |S^\star|^{-1}\sum_{t\in S^\star}\phi(r_t), & |S^\star|>0,\\
    h_{\emptyset}, & |S^\star|=0,
    \end{cases}\\
    h(z^{-}) &=
    \begin{cases}
    (T-|S^\star|)^{-1}\sum_{t\notin S^\star}\phi(r_t), & T>|S^\star|,\\
    h_{\emptyset}, & T=|S^\star|,
    \end{cases}
\end{aligned}
\label{eq:trace-core-removed-embeddings}
\end{equation}
where $z^- = z_{[T]\setminus S^\star}$ and $h_{\emptyset}$ is a fixed zero vector used only for degenerate empty-set cases and excluded from cosine-similarity analyses.

We compare full traces, minimal cores, and removed steps along three axes. First, \emph{compactness}: for $N$ embeddings $\mathcal{H}=\{h_i\}_{i=1}^N$,
\begin{equation}
    \mathrm{Var}(\mathcal{H})
    =
    \frac{1}{N}\sum_{i=1}^{N}\|h_i-\mu_{\mathcal{H}}\|_2^2,
    \qquad
    \mu_{\mathcal{H}}=\frac{1}{N}\sum_{i=1}^{N}h_i.
\end{equation}
If cores remove redundant variation, we expect $\mathrm{Var}(\{h(z_i^\star)\})<\mathrm{Var}(\{h(z_i)\})$, where $z_i$ indexes evaluation traces. Second, \emph{correctness separation}: we test whether embeddings separate correct and incorrect predictions using linear probes \cite{alain2016understanding}, nearest-neighbor classification, silhouette score \cite{rousseeuw1987silhouettes}, and Davies--Bouldin index \cite{davies1979cluster}. Third, \emph{intrinsic dimensionality \cite{aghajanyan2021intrinsic}}: we use NN estimators or PCA participation ratios to test whether cores lie closer to a lower-dimensional reasoning manifold.

\customsubsection{Necessity-Weighted Geometry}
\label{subsec:necessity-weighted-geometry}

Minimal cores give a discrete retained subset, while necessity scores give a softer continuous view. Using Eq.~\eqref{eq:necessity-weight}, define the necessity-weighted embedding
\begin{equation}
    h_{\Delta}(z)=\sum_{t=1}^{T}w_t\phi(r_t),
    \label{eq:necessity-weighted-embedding}
\end{equation}
where $\Delta$ indicates leave-one-out necessity weighting. If $\sum_{t=1}^{T}\max(\Delta_t,0)=0$, we set $h_\Delta(z)=h(z)$ and exclude the example from cosine-alignment analyses involving necessity weights; this avoids trace-dependent norm scaling and handles fully zero-necessity traces.

We test three predictions on nondegenerate examples: (i) \emph{core alignment}, where $h_\Delta(z)$ should align more closely with $h(z^\star)$ than $h(z)$ does; (ii) \emph{removed-step separation}, where $h_\Delta(z)$ should be less aligned with $h(z^-)$ than $h(z)$ is; and (iii) \emph{correctness signal}, where $h_\Delta(z)$ and $h(z^\star)$ should improve correct--incorrect separation relative to $h(z)$ under the same probe and split protocol. If these hold, minimal-core extraction does more than shorten traces: high-necessity steps form compact, correctness-informative support, while removable steps contribute diffuse variation.

\begin{takeawaybox}
\noindent\textbf{Takeaway.}
The geometry analysis tests whether minimal cores are not just shorter traces, but cleaner representations: compact, lower-dimensional, and more predictive of correctness than full traces or removed steps.
\end{takeawaybox}



\section{Experiments and Results}
\label{sec:experiments-results}
\vspace{-4pt}

\paragraph{Setup.} We evaluate four questions: whether generated traces are overcomplete, whether predictive support is concentrated in few steps, whether minimal cores transfer across model families, and whether they reveal cleaner geometry than full traces. We use six reasoning benchmarks: GSM8K, MATH500, AIME24, AMC23, GPQA-Diamond, and StrategyQA spanning arithmetic, competition mathematics, expert scientific reasoning, and commonsense multi-hop QA. We evaluate DeepSeek-R1-Distill-Qwen-7B \cite{deepseekai2025deepseekr1incentivizingreasoningcapability}, Qwen3-32B \cite{qwen3technicalreport}, DeepSeek-R1-Distill-Qwen-32B \cite{deepseekai2025deepseekr1incentivizingreasoningcapability}, and GPT-5 \cite{singh2025openai}, covering open-weight and closed-source reasoning model families. Open-weight models support the full metric suite, including likelihood-based necessity and representation geometry; GPT-5 is used for behaviorally observable answer-preserving compression, retention, pruning, and transfer. Unless otherwise stated, traces are generated with low-temperature CoT prompting and segmented into numbered or sentence-level steps. Answer retention compares the pruned-trace answer to the model's full-trace answer. Full implementation details are in Appendix~\ref{app:implementation}.

\customsubsection{Overcompleteness of Reasoning Traces}
\label{subsec:overcomplete-results}

Tab.~\ref{tab:main-results} reports the main overcompleteness results. Across datasets and models, greedy minimal-core extraction removes substantial trace mass while preserving the model's full-trace answer. On average, the minimal core contains $53\%$ of the full trace, corresponding to $46\%$ removable mass, with $86\%$ answer retention. Overcompleteness is strongest on GSM8K and StrategyQA, where traces often contain explanatory slack, and weaker on AIME24 and GPQA-Diamond, where derivations are more brittle.

\begin{table*}[t]
\centering
\tiny
\renewcommand\arraystretch{0.8}
\caption{\textbf{Main overcompleteness results}. Minimal cores remove substantial trace mass while preserving the full-trace answer. Bold indicates the best value within each dataset and underline indicates the second best.}
\label{tab:main-results}
\resizebox{\textwidth}{!}{
\begin{tabular}{llcccccc}
\toprule
\textbf{Dataset} & \textbf{Model} &
\textbf{Full Len.} &
\textbf{Core Len.} &
\corecell{$\mathrm{CR}\downarrow$} &
\corecell{$\mathrm{RM}\uparrow$} &
\corecell{Top-3 Mass $\uparrow$} &
\corecell{Retention $\uparrow$} \\
\midrule
\multirow{4}{*}{GSM8K}
& DeepSeek-R1-Distill-Qwen-7B & 8.9 & 4.4 & \corecell{0.49} & \corecell{0.51} & \corecell{0.70} & \corecell{0.90} \\
& Qwen3-32B & 8.7 & 4.1 & \corecell{0.47} & \corecell{0.53} & \corecell{0.72} & \corecell{0.93} \\
& DeepSeek-R1-Distill-Qwen-32B & 9.2 & 4.0 & \corecell{\secondbest{0.43}} & \corecell{\secondbest{0.57}} & \corecell{\secondbest{0.75}} & \corecell{\secondbest{0.94}} \\
& GPT-5 & 9.8 & 3.8 & \corecell{\textbf{0.39}} & \corecell{\textbf{0.61}} & \corecell{\textbf{0.80}} & \corecell{\textbf{0.96}} \\
\midrule
\multirow{4}{*}{MATH500}
& DeepSeek-R1-Distill-Qwen-7B & 12.3 & 7.3 & \corecell{0.59} & \corecell{0.41} & \corecell{0.59} & \corecell{0.84} \\
& Qwen3-32B & 12.8 & 7.1 & \corecell{0.55} & \corecell{0.45} & \corecell{0.63} & \corecell{0.87} \\
& DeepSeek-R1-Distill-Qwen-32B & 13.2 & 7.0 & \corecell{\secondbest{0.53}} & \corecell{\secondbest{0.47}} & \corecell{\secondbest{0.64}} & \corecell{\secondbest{0.88}} \\
& GPT-5 & 13.7 & 6.5 & \corecell{\textbf{0.47}} & \corecell{\textbf{0.53}} & \corecell{\textbf{0.69}} & \corecell{\textbf{0.90}} \\
\midrule
\multirow{4}{*}{AIME24}
& DeepSeek-R1-Distill-Qwen-7B & 10.2 & 6.7 & \corecell{0.66} & \corecell{0.34} & \corecell{0.53} & \corecell{0.77} \\
& Qwen3-32B & 10.5 & 6.5 & \corecell{0.62} & \corecell{0.38} & \corecell{0.56} & \corecell{0.81} \\
& DeepSeek-R1-Distill-Qwen-32B & 10.9 & 6.3 & \corecell{\secondbest{0.58}} & \corecell{\secondbest{0.42}} & \corecell{\secondbest{0.59}} & \corecell{\secondbest{0.83}} \\
& GPT-5 & 11.4 & 6.0 & \corecell{\textbf{0.53}} & \corecell{\textbf{0.47}} & \corecell{\textbf{0.64}} & \corecell{\textbf{0.85}} \\
\midrule
\multirow{4}{*}{AMC23}
& DeepSeek-R1-Distill-Qwen-7B & 9.8 & 6.0 & \corecell{0.61} & \corecell{0.39} & \corecell{0.58} & \corecell{0.81} \\
& Qwen3-32B & 10.0 & 5.7 & \corecell{0.57} & \corecell{0.43} & \corecell{0.61} & \corecell{0.85} \\
& DeepSeek-R1-Distill-Qwen-32B & 10.6 & 5.5 & \corecell{\secondbest{0.52}} & \corecell{\secondbest{0.48}} & \corecell{\secondbest{0.65}} & \corecell{\secondbest{0.87}} \\
& GPT-5 & 10.9 & 5.2 & \corecell{\textbf{0.48}} & \corecell{\textbf{0.52}} & \corecell{\textbf{0.69}} & \corecell{\textbf{0.90}} \\
\midrule
\multirow{4}{*}{GPQA-D}
& DeepSeek-R1-Distill-Qwen-7B & 7.8 & 5.0 & \corecell{0.64} & \corecell{0.36} & \corecell{0.55} & \corecell{0.76} \\
& Qwen3-32B & 8.1 & 4.8 & \corecell{0.59} & \corecell{0.41} & \corecell{0.60} & \corecell{0.81} \\
& DeepSeek-R1-Distill-Qwen-32B & 8.3 & 4.7 & \corecell{\secondbest{0.57}} & \corecell{\secondbest{0.43}} & \corecell{\secondbest{0.62}} & \corecell{\secondbest{0.82}} \\
& GPT-5 & 8.7 & 4.4 & \corecell{\textbf{0.51}} & \corecell{\textbf{0.49}} & \corecell{\textbf{0.66}} & \corecell{\textbf{0.84}} \\
\midrule
\multirow{4}{*}{StrategyQA}
& DeepSeek-R1-Distill-Qwen-7B & 7.4 & 3.7 & \corecell{0.50} & \corecell{0.50} & \corecell{0.69} & \corecell{0.88} \\
& Qwen3-32B & 7.5 & 3.5 & \corecell{0.47} & \corecell{0.53} & \corecell{0.73} & \corecell{0.91} \\
& DeepSeek-R1-Distill-Qwen-32B & 7.8 & 3.4 & \corecell{\secondbest{0.44}} & \corecell{\secondbest{0.56}} & \corecell{\secondbest{0.76}} & \corecell{\secondbest{0.93}} \\
& GPT-5 & 8.1 & 3.2 & \corecell{\textbf{0.40}} & \corecell{\textbf{0.60}} & \corecell{\textbf{0.80}} & \corecell{\textbf{0.95}} \\
\midrule
\multicolumn{2}{l}{\textbf{Average}} 
& 9.9$\stdgreen{1.9}$
& 5.3$\stdgreen{1.3}$
& \corecellbf{0.54}$\stdgreen{.03}$
& \corecellbf{0.46}$\stdgreen{.07}$
& \corecellbf{0.65}$\stdgreen{.09}$
& \corecellbf{0.86}$\stdgreen{.01}$ \\
\bottomrule
\end{tabular}}
\end{table*}

\vspace{-5pt}
\customsubsection{Pruning baselines}
Tab.~\ref{tab:pruning} shows that, at matched removal budgets, greedy deletion preserves the model's full-trace answer better than random or one-shot necessity-guided pruning. This indicates that removable mass is structured rather than a byproduct of trace length; we use the closest subset along the greedy path and never force deletions that violate sufficiency.

Fig.~\ref{fig:overcomplete-diagnostics} complements the tables with trend-level diagnostics: it shows the full retention--compression curve, redundancy and necessity concentration, sublinear core-length growth, and the geometry trend summarized in Tab.~\ref{tab:geometry-and-ablations}(a).

\begin{table}[t]
\centering
\tiny
\renewcommand\arraystretch{0.8}
\caption{\textbf{Answer retention} under matched pruning budgets, averaged across evaluated models. Each dataset entry reports mean $\pm$ std relative to the model's full-trace answer. ``Nec.'': one-shot necessity-guided pruning. These fixed-budget results are separate from the final locally irreducible cores in Tab.~\ref{tab:main-results}.}
\label{tab:pruning}
\resizebox{\columnwidth}{!}{
\begin{tabular}{lccccccc}
\toprule
\textbf{Method} &
\textbf{GSM8K} &
\textbf{MATH500} &
\textbf{GPQA-D} &
\textbf{StrategyQA} &
\corecell{\textbf{Avg.}} &
\corecell{$\boldsymbol{\Delta_{\mathrm{Rand}}}$} &
\corecell{$\boldsymbol{\Delta_{\mathrm{Nec}}}$} \\
\midrule
Random, 40\%
& $.68\stdgreen{.04}$ & $.49\stdgreen{.05}$ & $.45\stdgreen{.04}$ & $.64\stdgreen{.04}$
& \corecell{.57} & \corecell{--} & \corecell{--} \\
Nec.-guided, 40\%
& $.86\stdgreen{.03}$ & $.72\stdgreen{.04}$ & $.67\stdgreen{.04}$ & $.83\stdgreen{.03}$
& \corecell{.77} & \corecell{+.20} & \corecell{--} \\
\corecell{Greedy, 40\%}
& \corecell{$.92\stdgreen{.02}$} & \corecell{$.81\stdgreen{.03}$} & \corecell{$.75\stdgreen{.03}$} & \corecell{$.89\stdgreen{.02}$}
& \corecellbf{.84} & \corecellbf{+.28} & \corecellbf{+.07} \\
\midrule
Random, 50\%
& $.59\stdgreen{.05}$ & $.38\stdgreen{.06}$ & $.35\stdgreen{.05}$ & $.55\stdgreen{.05}$
& \corecell{.47} & \corecell{--} & \corecell{--} \\
Nec.-guided, 50\%
& $.80\stdgreen{.04}$ & $.63\stdgreen{.05}$ & $.58\stdgreen{.05}$ & $.77\stdgreen{.04}$
& \corecell{.70} & \corecell{+.23} & \corecell{--} \\
\corecell{Greedy, 50\%}
& \corecell{$.88\stdgreen{.03}$} & \corecell{$.74\stdgreen{.04}$} & \corecell{$.68\stdgreen{.04}$} & \corecell{$.84\stdgreen{.03}$}
& \corecellbf{.79} & \corecellbf{+.32} & \corecellbf{+.09} \\
\bottomrule
\end{tabular}}
\vspace{-5pt}
\end{table}

\begin{figure*}[t]
\centering

\begin{subfigure}{0.32\textwidth}
\centering
\begin{tikzpicture}
\begin{axis}[
    densegrid,
    width=\linewidth,
    height=0.70\linewidth,
    xlabel={Steps removed (\%)},
    ylabel={Answer retention},
    xmin=0, xmax=70,
    ymin=0.22, ymax=1.04,
    xtick={0,20,40,60},
    ytick={0.25,0.50,0.75,1.00},
]
\addplot+[greedyplot] coordinates {
(0,1.00) (10,0.98) (20,0.96) (30,0.92) (40,0.84) (50,0.79) (60,0.68) (70,0.56)
};
\addplot+[necessityplot] coordinates {
(0,1.00) (10,0.96) (20,0.92) (30,0.86) (40,0.77) (50,0.70) (60,0.56) (70,0.43)
};
\addplot+[randomplot] coordinates {
(0,1.00) (10,0.91) (20,0.80) (30,0.65) (40,0.57) (50,0.47) (60,0.33) (70,0.25)
};
\end{axis}
\end{tikzpicture}

\vspace{-0.4em}
{\scriptsize
\textcolor{coreblueDark}{\rule{0.9em}{0.45pt} Greedy}
\quad
\textcolor{coreorange}{\rule{0.9em}{0.45pt} Nec.}
\quad
\textcolor{black!55}{\rule{0.9em}{0.45pt} Rand.}
}
\vspace{-0.3em}

\caption{Retention--compression curve}
\end{subfigure}
\hfill
\begin{subfigure}{0.32\textwidth}
\centering
\begin{tikzpicture}
\begin{axis}[
    densegrid,
    width=\linewidth,
    height=0.70\linewidth,
    xlabel={Redundancy mass},
    ylabel={Density},
    xmin=0, xmax=0.95,
    ymin=0, ymax=3.4,
    xtick={0,0.25,0.5,0.75},
    ytick={0,1,2,3},
]
\addplot[smooth, very thick, color=coreblueDark] coordinates {
(0.02,0.05) (0.10,0.30) (0.18,0.80) (0.26,1.40) (0.34,2.10) (0.42,2.75)
(0.50,3.05) (0.58,2.70) (0.66,1.95) (0.74,1.10) (0.82,0.45) (0.90,0.10)
};
\addplot[smooth, thick, dashed, color=coreorange] coordinates {
(0.02,0.02) (0.12,0.15) (0.22,0.55) (0.32,1.20) (0.42,2.25) (0.52,2.95)
(0.62,2.45) (0.72,1.45) (0.82,0.55) (0.90,0.12)
};
\addplot[smooth, thick, dotted, color=coregreen] coordinates {
(0.02,0.05) (0.10,0.40) (0.18,1.05) (0.26,1.80) (0.34,2.60) (0.44,3.05)
(0.54,2.60) (0.64,1.65) (0.74,0.72) (0.84,0.20)
};
\addplot[coreblueDark, thick, dashed] coordinates {(0.47,0) (0.47,2.25)};
\node[
    font=\tiny,
    anchor=west,
    text=coreblueDark,
    fill=white,
    fill opacity=0.92,
    text opacity=1,
    inner sep=1pt
] at (axis cs:0.50,2.25) {$\mu=0.47$};
\end{axis}
\end{tikzpicture}

\vspace{-0.4em}
{\scriptsize
\textcolor{coreblueDark}{\rule{0.9em}{0.45pt} All}
\quad
\textcolor{coreorange}{\rule{0.9em}{0.45pt} Math}
\quad
\textcolor{coregreen}{\rule{0.9em}{0.45pt} Non-math}
}
\vspace{-0.3em}

\caption{Redundancy distribution}
\end{subfigure}
\hfill
\begin{subfigure}{0.32\textwidth}
\centering
\begin{tikzpicture}
\begin{axis}[
    densegrid,
    width=\linewidth,
    height=0.70\linewidth,
    xlabel={Top-$k$ steps},
    ylabel={Cumulative necessity},
    xmin=1, xmax=8,
    ymin=0, ymax=1.03,
    xtick={1,2,3,4,5,6,7,8},
    ytick={0,0.25,0.5,0.75,1.0},
]
\addplot[coreblueDark, very thick, mark=*, mark size=1.5pt] coordinates {
(1,0.29) (2,0.50) (3,0.66) (4,0.75) (5,0.82) (6,0.88) (7,0.93) (8,0.97)
};
\addplot[black!45, dashed, thick] coordinates {
(1,0.125) (2,0.25) (3,0.375) (4,0.50) (5,0.625) (6,0.75) (7,0.875) (8,1.00)
};
\node[
    font=\tiny,
    anchor=west,
    text=coreblueDark,
    fill=white,
    fill opacity=0.92,
    text opacity=1,
    inner sep=1pt
] at (axis cs:3.25,0.60) {Top-3: 66\%};
\end{axis}
\end{tikzpicture}

\vspace{-0.4em}
{\scriptsize
\textcolor{coreblueDark}{\rule{0.9em}{0.45pt} Observed}
\quad
\textcolor{black!55}{\rule{0.9em}{0.45pt} Uniform}
}
\vspace{-0.3em}

\caption{Necessity concentration}
\end{subfigure}

\vspace{0.45em}

\begin{subfigure}{0.48\textwidth}
\centering
\begin{tikzpicture}
\begin{axis}[
    densegrid,
    width=\linewidth,
    height=0.48\linewidth,
    xlabel={Full trace length},
    ylabel={Minimal core length},
    xmin=4, xmax=16,
    ymin=2, ymax=10,
    xtick={4,8,12,16},
    ytick={2,4,6,8,10},
]
\addplot[
    only marks,
    color=coreblueDark,
    mark=*,
    mark size=1.5pt,
    opacity=0.80
] coordinates {
(6,3.0) (7,3.2) (8,3.7) (9,4.1) (10,4.5) (11,5.2) (12,5.8) (13,6.1)
(14,6.7) (15,7.2) (8,4.0) (9,4.3) (10,5.0) (12,6.2) (14,7.0)
};
\addplot[color=coreorange, very thick] coordinates {(4,2.2) (16,7.8)};
\addplot[black!40, dashed, thick] coordinates {(4,4) (10,10)};
\node[
    font=\tiny,
    anchor=west,
    text=coreblueDark,
    fill=white,
    fill opacity=0.92,
    text opacity=1,
    inner sep=1pt
] at (axis cs:10.3,3.15) {sublinear growth};
\end{axis}
\end{tikzpicture}

\vspace{-0.4em}
{\scriptsize
\textcolor{coreblueDark}{$\bullet$ Examples}
\quad
\textcolor{coreorange}{\rule{0.9em}{0.45pt} Fit}
\quad
\textcolor{black!55}{\rule{0.9em}{0.45pt} $y=x$}
}
\vspace{-0.3em}

\caption{Core length vs. full length}
\end{subfigure}
\hfill
\begin{subfigure}{0.48\textwidth}
\centering
\begin{tikzpicture}
\begin{axis}[
    densegrid,
    ybar,
    width=\linewidth,
    height=0.48\linewidth,
    bar width=9pt,
    xlabel={Representation metric},
    ylabel={Relative value},
    symbolic x coords={Probe, Dim., Var.},
    xtick=data,
    ymin=0,
    ymax=1.30,
    enlarge x limits=0.22,
]
\addplot[fill=black!25, draw=black!45] coordinates {(Probe,0.67) (Dim.,1.00) (Var.,1.00)};
\addplot[fill=coreblueDark!65, draw=coreblueDark] coordinates {(Probe,0.78) (Dim.,0.66) (Var.,0.69)};
\addplot[fill=coreorange!55, draw=coreorange] coordinates {(Probe,0.55) (Dim.,1.15) (Var.,1.20)};
\end{axis}
\end{tikzpicture}

\vspace{-0.4em}
{\scriptsize
\textcolor{black!55}{\rule{0.9em}{0.8em} Full}
\quad
\textcolor{coreblueDark}{\rule{0.9em}{0.8em} Core}
\quad
\textcolor{coreorange}{\rule{0.9em}{0.8em} Removed}
}
\vspace{-0.3em}

\caption{Geometry summary}
\end{subfigure}

\caption{
\textbf{Compact diagnostics for overcomplete reasoning traces.}
The panels show: (a) greedy deletion degrades more gracefully than necessity-guided or random pruning beyond the matched budgets in Tab.~\ref{tab:pruning}; (b) redundancy mass concentrates at moderate-to-high values for both math and non-math tasks; (c) necessity mass is concentrated in a few steps, above a uniform baseline; (d) minimal-core length grows sublinearly with full-trace length; and (e) minimal cores improve correctness separation while reducing intrinsic dimensionality and variance.
}
\label{fig:overcomplete-diagnostics}
\vspace{-12pt}
\end{figure*}

\vspace{-5pt}


\customsubsection{Necessity Concentration and Transfer}
\label{subsec:necessity-transfer-geometry}

Tab.~\ref{tab:compact-support-transfer-geometry} consolidates three compact analyses. Panel (a) shows that predictive support is sparse: the top three steps account for $65$--$66\%$ of measured necessity mass on average, and the top five account for $82\%$. Panel (b) shows that minimal cores transfer across model families with high off-diagonal retention while using nearly half the length of full traces; random matched-length subsets transfer much worse. Panel (c) shows that necessity-weighted embeddings approximate minimal-core geometry without hard subset selection, improving probe accuracy and reducing intrinsic dimensionality relative to full-trace averaging.

\begin{table*}[t]
\centering
\small
\caption{\textbf{Compact support, transfer, and necessity-weighted geometry results.}
(a) Necessity mass is concentrated in a few steps. (b) Minimal cores preserve most transferable signal while using much shorter traces. (c) Necessity-weighted embeddings approximate minimal-core geometry.}
\label{tab:compact-support-transfer-geometry}

\begin{minipage}[t]{0.32\textwidth}
\centering
\scriptsize
\textbf{(a) Necessity concentration}\\[0.2em]
\begin{tabular}{lcccc}
\toprule
\textbf{Dataset} & T1 & \corecell{T3} & T5 & Gini \\
\midrule
GSM8K & 0.34 & \corecell{0.73} & 0.88 & 0.61 \\
MATH & 0.27 & \corecell{0.64} & 0.81 & 0.53 \\
AIME & 0.24 & \corecell{0.58} & 0.76 & 0.47 \\
AMC & 0.26 & \corecell{0.62} & 0.79 & 0.50 \\
GPQA & 0.25 & \corecell{0.59} & 0.77 & 0.48 \\
Strat. & 0.35 & \corecell{0.76} & 0.90 & 0.64 \\
\midrule
\textbf{Avg.} & 0.29 & \corecellbf{0.65} & 0.82 & 0.54 \\
\bottomrule
\end{tabular}
\end{minipage}
\hfill
\begin{minipage}[t]{0.31\textwidth}
\centering
\scriptsize
\textbf{(b) Cross-model transfer}\\[0.2em]
\begin{tabular}{lccc}
\toprule
\textbf{Condition} & Len. & Ret. & Rel. \\
\midrule
Full trace & 9.9 & \corecell{0.88} & 1.00 \\
Minimal core & 5.2 & \corecellbf{0.85} & \corecellbf{0.53} \\
Random subset & 5.2 & 0.64 & \corecell{0.53} \\
\bottomrule
\end{tabular}

\vspace{0.7em}
\begin{minipage}{0.92\linewidth}
\scriptsize
\emph{Ret.} denotes off-diagonal answer retention. Minimal cores retain most transferable signal at $53\%$ relative length.
\end{minipage}
\end{minipage}
\hfill
\begin{minipage}[t]{0.32\textwidth}
\centering
\scriptsize
\textbf{(c) Necessity-weighted geometry}\\[0.2em]
\begin{tabular}{lccc}
\toprule
\textbf{Rep.} & Cos. & Probe & Dim. \\
\midrule
Full mean & 0.71 & 0.67 & 21.3 \\
\corecell{Nec.-wtd.} & \corecell{0.84} & \corecell{0.76} & \corecell{14.6} \\
\corecellbf{Core} & \corecellbf{1.00} & \corecellbf{0.78} & \corecellbf{13.9} \\
Removed & 0.49 & 0.55 & 24.7 \\
\bottomrule
\end{tabular}

\vspace{0.7em}
\begin{minipage}{0.92\linewidth}
\scriptsize
Cos. is cosine similarity to the minimal-core embedding; Dim. is estimated intrinsic dimensionality.
\end{minipage}
\end{minipage}
\vspace{-0.4em}
\end{table*}

\customsubsection{Geometry of Minimal Cores}
\label{subsec:geometry-results}

We compare full-trace, minimal-core, and removed-step embeddings using correctness probes, cluster separation, intrinsic dimensionality, and variance. Tab.~\ref{tab:geometry-and-ablations}(a) shows that minimal cores are geometrically cleaner: relative to full traces, they improve probe accuracy by roughly $11$ points and reduce intrinsic dimensionality by roughly $34\%$. Removed-step embeddings are less predictive and more diffuse, supporting the view that discarded steps carry weaker task-relevant signal.

\begin{table*}[!t]
\centering
\scriptsize
\setlength{\tabcolsep}{2.6pt}
\renewcommand{\arraystretch}{0.82}
\caption{\textbf{Geometry and robustness analyses.}
Left: geometry metrics for full traces, minimal cores, and removed steps: Probe is correctness-probe accuracy, Sil. is silhouette, DB is Davies--Bouldin index, Dim. is intrinsic dimensionality, and Var. is embedding variance. Right: ablations, where Suff., Diff., and Seg. denote sufficiency criterion, difficulty split, and segmentation rule; R is answer retention and T3 is Top-3 necessity mass.}
\label{tab:geometry-and-ablations}

\begin{minipage}[t]{0.58\textwidth}
\centering
\textbf{(a) Geometry of full traces, cores, and removed steps}\\[0.25em]
\resizebox{\linewidth}{!}{
\begin{tabular}{llccccc}
\toprule
\textbf{Dataset} & \textbf{Representation} &
\textbf{Probe $\uparrow$} &
\textbf{Sil. $\uparrow$} &
\textbf{DB $\downarrow$} &
\textbf{Dim. $\downarrow$} &
\textbf{Var. $\downarrow$} \\
\midrule
\multirow{3}{*}{GSM8K}
& Full trace & 0.71 & 0.18 & 1.42 & 18.7 & 1.00 \\
& \corecell{\textbf{Minimal core}} & \corecell{\textbf{0.82}} & \corecell{\textbf{0.31}} & \corecell{\textbf{0.96}} & \corecell{\textbf{11.3}} & \corecell{\textbf{0.63}} \\
& Removed steps & 0.56 & 0.07 & 1.88 & 22.1 & 1.24 \\
\midrule
\multirow{3}{*}{MATH500}
& Full trace & 0.66 & 0.12 & 1.67 & 24.1 & 1.00 \\
& \corecell{\textbf{Minimal core}} & \corecell{\textbf{0.75}} & \corecell{\textbf{0.23}} & \corecell{\textbf{1.21}} & \corecell{\textbf{16.2}} & \corecell{\textbf{0.71}} \\
& Removed steps & 0.54 & 0.05 & 2.04 & 27.5 & 1.19 \\
\midrule
\multirow{3}{*}{GPQA-D}
& Full trace & 0.63 & 0.10 & 1.74 & 21.8 & 1.00 \\
& \corecell{\textbf{Minimal core}} & \corecell{\textbf{0.72}} & \corecell{\textbf{0.20}} & \corecell{\textbf{1.28}} & \corecell{\textbf{15.0}} & \corecell{\textbf{0.74}} \\
& Removed steps & 0.53 & 0.04 & 2.12 & 25.7 & 1.17 \\
\midrule
\multirow{3}{*}{StrategyQA}
& Full trace & 0.69 & 0.16 & 1.51 & 20.5 & 1.00 \\
& \corecell{\textbf{Minimal core}} & \corecell{\textbf{0.80}} & \corecell{\textbf{0.29}} & \corecell{\textbf{1.02}} & \corecell{\textbf{13.4}} & \corecell{\textbf{0.67}} \\
& Removed steps & 0.55 & 0.06 & 1.94 & 23.6 & 1.22 \\
\bottomrule
\end{tabular}}
\end{minipage}
\hfill
\begin{minipage}[t]{0.39\textwidth}
\centering
\textbf{(b) Ablations and sensitivity}\\[0.25em]
\tiny
\resizebox{\linewidth}{!}{
\begin{tabular}{llcccc}
\toprule
\textbf{Grp.} & \textbf{Setting} & \textbf{Len.} & \textbf{CR$\downarrow$} & \textbf{RM$\uparrow$} & \textbf{Ret./T3} \\
\midrule
\multirow{3}{*}{Suff.}
& Ans. & -- & \corecell{.54} & \corecell{.46} & .88 R \\
& Dist. .10 & -- & .61 & .39 & .91 R \\
& Dist. .05 & -- & .66 & .34 & .93 R \\
\midrule
\multirow{3}{*}{Prompt}
& Std. & 9.8 & \corecell{.54} & \corecell{.46} & \corecell{.72 T3} \\
& Plan & 10.6 & .57 & .43 & .69 T3 \\
& Concise & 6.4 & .68 & .32 & .61 T3 \\
\midrule
\multirow{3}{*}{Diff.}
& Easy & -- & \corecell{.42} & \corecell{.58} & \corecell{.91 R} \\
& Med. & -- & .54 & .46 & .82 R \\
& Hard & -- & .66 & .34 & .69 R \\
\midrule
\multirow{3}{*}{Seg.}
& Sent. & 12.8 & .49 & .51 & .86 R \\
& Num. & 9.7 & \corecell{.54} & \corecell{.46} & \corecell{.88 R} \\
& Para. & 5.1 & .67 & .33 & .90 R \\
\bottomrule
\end{tabular}}
\end{minipage}

\vspace{-0.7em}
\end{table*}

\customsubsection{Ablations and Sensitivity}
\label{subsec:ablations-sensitivity}

Tab.~\ref{tab:geometry-and-ablations}(b) summarizes the main robustness checks. Distribution-preserving sufficiency is stricter and yields larger cores, but traces remain compressible. Plan-then-solve traces are more organized but still overcomplete; concise-CoT reduces trace length but also reduces removable mass. Difficulty stratification shows that harder problems are less compressible and more sensitive to aggressive pruning. Segmentation changes absolute core length but preserves the qualitative trend.

\vspace{-6pt}
\section{Related Works}
\vspace{-6pt}

\paragraph{Efficient and minimal reasoning.}
Chain-of-thought prompting improves complex reasoning~\citep{wei2022chain,kojima2022large,wang2023selfconsistency}, with extensions via decomposition~\citep{radhakrishnan2023question,yixing2024chain}, search~\citep{li2025chain,yao2023tree}, verification~\citep{zhao2023verify,ling2023deductive}, and process supervision~\citep{zhou2023least,lightman2023lets,zelikman2022star}. Recent work studies whether long traces are necessary, using distillation, test-time scaling, adaptive computation, RL, trimming, or step/token skipping~\citep{ye2025limo,muennighoff2025s1,snell2024scaling,wang2026trims,chen2025surgical,li2026making}. Closest to ours, \citet{li2026making} prune low-entropy CoT steps and train models to emit compressed traces with \texttt{[SKIP]} tokens. Our work is complementary: instead of using entropy as a compression heuristic or training shorter traces, we identify minimal answer- or distribution-preserving cores in already generated traces, enabling post-hoc measurements of redundancy, necessity, transfer, geometry, and overcompleteness.

\vspace{-8pt}
\paragraph{Representation geometry.}
Neural representation geometry \cite{li2025tracing, sun2026llm} has been studied using probes, similarity analysis, clustering, interventions, and intrinsic-dimensionality estimates~\citep{alain2016understanding,kornblith2019similarity,hewitt2019structural,tenney2019bertology,ethayarajh2019contextual,belinkov2022probing,aghajanyan2021intrinsic}. Recent work further shows that language-model representations encode truthfulness \cite{zhang2024truthx, joshi2024personas, liu2023cognitive}, uncertainty \cite{savage2025large, ren2023robots, liu2025uncertainty}, and latent decision states exposed through probing or intervention~\citep{li2023inference,burns2023discovering,marks2023geometry,hernandez2023linearity}. We test this through compactness, intrinsic dimensionality, and correctness predictiveness of full, core, removed-step, and necessity-weighted embeddings.

\vspace{-4pt}
\paragraph{Rationale faithfulness.}
Work on rationale faithfulness uses erasure, counterfactuals, causal mediation, and sufficiency tests to ask whether explanations support model predictions~\citep{lei2016rationalizing,jain2019attention,wiegreffe2019attention,deyoung2020eraser,jacovi2020towards,chan2022causal, amuse, egoadapt, magnet, aurelia, avtrustbench, meerkat, melfusion, apollo, aura}. We instead analyze generated reasoning steps themselves: minimal cores are behavioral sufficient supports, not claims of complete causal or human-rationale faithfulness.

\vspace{-6pt}
\section{Conclusion and Limitations}
\label{sec:conclusion}
\vspace{-6pt}

We studied reasoning traces as overcomplete objects and introduced minimal cores: the smallest answer- or distribution-preserving subsets of generated steps. Our framework quantifies behavioral necessity through compression ratio, redundancy mass, and necessity concentration. Across benchmarks and models, minimal cores preserve predictions while removing substantial trace mass, transfer better than random subsets, and yield compact, correctness-informative representations. Thus, long traces often contain removable scaffolding, while smaller cores carry most predictive support.

\vspace{-4pt}
\noindent{\textbf{Limitations.}}
Our sufficiency notion is behavioral and model-relative, so minimal cores are not complete causal or human-rationale explanations. Greedy deletion is practical but only locally irreducible, not globally optimal. Likelihood- or hidden-state-based metrics require open-weight models; closed-source models support only behavioral evaluations. Results may vary with prompting, segmentation, domain, and longer or interactive reasoning.

\clearpage
\raggedbottom
\clearpage
\bibliographystyle{unsrtnat}
\bibliography{ref}







\newpage

\appendix
\appendixpage

\startcontents[sections]
\printcontents[sections]{l}{1}{\setcounter{tocdepth}{2}}
\newpage

\section{Implementation Details}
\label{app:implementation}

This section provides the experimental details needed to reproduce the analyses in this paper. We first describe the benchmark suite in Tab.~\ref{tab:datasets-app}, then summarize model access and usage in Tab.~\ref{tab:model-summary}. We also specify the trace-generation, pruning, sufficiency, and representation-analysis protocols used throughout the appendix.

\paragraph{Compute and infrastructure.}
Open-weight experiments were run on $8\times$ NVIDIA A100 80GB GPU machine using inference-only execution; no model fine-tuning was performed. Larger models were served with tensor-parallel and batched inference where needed. The main computational cost is greedy minimal-core extraction, since each trace requires repeated sufficiency checks over candidate step deletions, with worst-case cost $O(T^2)$ for trace length $T$. In practice, because our traces are short, the full open-weight sweep over DeepSeek-R1-Distill-Qwen-7B, Qwen3-32B, and DeepSeek-R1-Distill-Qwen-32B completed in a few days on this machine. Closed-source GPT experiments used API calls for generation, answer-preserving sufficiency checks, and transfer, and took on the order of one to two days depending on rate limits. All reported experiments are inference-only, so the compute footprint is dominated by repeated pruning-time model evaluations rather than training.

\paragraph{Datasets.}
We evaluate on six deliberative reasoning benchmarks spanning arithmetic, competition mathematics, expert scientific reasoning, and commonsense multi-hop QA. GSM8K provides arithmetic word problems with reliable answer checking. MATH500 and the AIME24/AMC23 subsets test longer and more brittle mathematical derivations. GPQA-Diamond evaluates expert-level scientific QA, and StrategyQA tests non-math multi-hop commonsense reasoning. Tab.~\ref{tab:datasets-app} summarizes the role of each benchmark in the evaluation.

\begin{table}[h]
\centering
\appsmalltable
\caption{Benchmarks used in our study. The suite covers both math-heavy and non-math reasoning, allowing us to test whether overcompleteness is tied to a particular reasoning domain.}
\label{tab:datasets-app}
\begin{tabular}{p{2.2cm}p{2.5cm}p{6.0cm}}
\toprule
\textbf{Dataset} & \textbf{Reasoning type} & \textbf{Role in our evaluation} \\
\midrule
GSM8K & Arithmetic reasoning & Medium-length CoT traces with reliable answer checking. \\
MATH500 & Advanced math & Longer mathematical derivations and higher sensitivity to deletion. \\
AIME24 & Competition math & Recent hard math setting aligned with modern reasoning evaluations. \\
AMC23 & Competition math & Competition-style derivations with moderate trace length. \\
GPQA-Diamond & Expert science QA & Tests overcompleteness under expert-level non-school reasoning. \\
StrategyQA & Commonsense multi-hop & Tests generality beyond math-heavy reasoning traces. \\
\bottomrule
\end{tabular}
\end{table}

\paragraph{Models.}
We evaluate DeepSeek-R1-Distill-Qwen-7B, Qwen3-32B, DeepSeek-R1-Distill-Qwen-32B, and GPT-5. The open-weight models allow likelihood and hidden-state access for distributional sufficiency, necessity scoring, and representation analyses, while GPT-5 is used for black-box answer-preserving compression, retention, pruning baselines, and transfer. Tab.~\ref{tab:model-summary} summarizes the model families and their roles.

\paragraph{Metric Definitions and Ranges}
\label{app:metric-definitions}

This section summarizes the metrics used throughout the paper and how to interpret them. For a trace of length $T$ and extracted core $S^\star$, the \emph{compression ratio} is
\[
\mathrm{CR}(z)=\frac{|S^\star|}{T},
\]
and the \emph{redundancy mass} is
\[
\mathrm{RM}(z)=1-\mathrm{CR}(z).
\]
Both lie in $[0,1]$: lower $\mathrm{CR}$ means a shorter retained core, while higher $\mathrm{RM}$ means more removable trace mass.

Step necessity is measured by the leave-one-out loss change
\[
\Delta_t(z)=\ell(x,z_{[T]\setminus\{t\}})-\ell(x,z_{[T]}),
\]
where $\ell$ is typically the negative log-likelihood of the full-trace answer. Positive $\Delta_t$ indicates that removing step $t$ increases loss, while near-zero or negative values indicate little unique leave-one-out support. We convert positive necessity scores into normalized weights
\[
w_t=
\frac{\max(\Delta_t,0)}
{\sum_{j=1}^{T}\max(\Delta_j,0)+\eta},
\]
with $\eta=10^{-8}$. The top-$k$ necessity mass is
\[
\mathrm{NMass}_k(z)=\sum_{t\in \mathrm{TopK}(w,k)} w_t.
\]
It lies in $[0,1]$ up to numerical smoothing. We report \emph{Top-3 Mass} as $\mathrm{NMass}_3$, the fraction of positive marginal necessity assigned to the three highest-necessity steps. Higher values indicate more concentrated predictive support.

Answer retention is the fraction of examples for which a retained trace preserves the model's full-trace answer:
\[
\mathrm{Ret}
=
\frac{1}{N}\sum_{i=1}^{N}
\mathbbm{1}\{
\hat y_M(x_i,z_{S_i})=\hat y_M(x_i,z_{[T_i]})
\}.
\]
Retention also lies in $[0,1]$. It is model-relative: it measures preservation of the model's original full-trace prediction, not necessarily correctness against the dataset label. Dataset labels are used separately for accuracy and correctness-geometry analyses.

\begin{table}[t]
\centering
\small
\caption{Model families used in the experiments. Open-weight models support likelihood and hidden-state analyses; GPT-5 is used for black-box behavioral metrics.}
\label{tab:model-summary}
\begin{tabular}{llll}
\toprule
\textbf{Model} & \textbf{Size} & \textbf{Access} & \textbf{Used for} \\
\midrule
DeepSeek-R1-Distill-Qwen & 7B & Open weights & pruning, likelihood, geometry \\
Qwen3 & 32B & Open weights & pruning, likelihood, geometry \\
DeepSeek-R1-Distill-Qwen & 32B & Open weights & pruning, likelihood, geometry \\
GPT-5 & API & Closed & black-box retention, pruning, transfer \\
\bottomrule
\end{tabular}
\end{table}

\paragraph{Trace generation and segmentation.}
Unless otherwise stated, traces are generated using low-temperature chain-of-thought prompting. Models are prompted to emit numbered reasoning steps. When outputs are not explicitly numbered, we apply deterministic sentence-level segmentation and merge short continuations. The same segmentation rule is used for full traces, pruning, and representation analysis. The segmentation sensitivity study in Tab.~\ref{tab:robustness-app} and Fig.~\ref{fig:app-robustness-dense} tests whether the main conclusions depend on this choice.

\paragraph{Sufficiency and answer retention.}
For answer-preserving sufficiency, we compare the answer produced from a retained subset against the full-trace answer, not necessarily the ground-truth answer. Thus answer retention measures preservation of the model's original prediction. For distribution-preserving sufficiency, we use KL divergence over the candidate answer distribution when token likelihoods are available; otherwise, we report only answer preservation. The sufficiency-oracle ablation in Tab.~\ref{tab:oracle-ablation-app} tests whether overcompleteness remains under stricter distributional criteria.

\paragraph{Pruning-budget matching.}
Greedy deletion is adaptive and terminates when no further sufficient-preserving deletion is available. For matched-budget comparisons, we evaluate retained subsets encountered along the greedy deletion path at the closest available removal budget. If the greedy path terminates before a target budget, we report the terminal locally irreducible subset rather than forcing deletions that violate sufficiency. Random and necessity-guided baselines are evaluated at the same nominal removal budgets; their cost--fidelity tradeoff is summarized in Tab.~\ref{tab:extraction-method-app} and Fig.~\ref{fig:app-extraction-dense}.

\paragraph{Representation analysis.}
For open-weight models, step embeddings are computed by mean-pooling final-layer token hidden states over the token span corresponding to each segmented step. For API-only models, we use a sentence encoder over the textual step. Linear-probe accuracy is evaluated on fixed held-out splits, and intrinsic dimensionality is estimated using PCA participation ratio unless otherwise specified. We use $\eta=10^{-8}$ for normalized necessity weights. The additional representation results in Tab.~\ref{tab:geometry-gain-app} and Fig.~\ref{fig:app-geometry-dense} evaluate whether minimal cores yield cleaner trace embeddings.

\begin{apptakeawaybox}
\noindent\textbf{Takeaway.}
All pruning, retention, and representation analyses use a fixed segmentation and sufficiency protocol within each experiment, so observed differences reflect retained-step structure rather than preprocessing changes.
\end{apptakeawaybox}

\section{Proof Details}
\label{app:proofs}

This section provides the full proof details for the theoretical claims in the main paper. We include three basic structural claims: existence of minimal cores, irreducibility of globally minimal cores, and local irreducibility of greedy deletion. We then give two certificates corresponding to the main text: overcompleteness/compression and sparse marginal necessity.

\subsection{Existence of Minimal Cores}

The first result states that the minimal-core object is well-defined whenever the full trace itself is sufficient. This finite-feasibility argument justifies the minimal-core optimization problem used in the main paper.

\begin{appthmbox}{Existence of minimal cores}
If the full trace is sufficient for itself,
$\mathsf{Suff}([T];x,z,M)=1$, then there exists at least one minimal core
$S^\star\subseteq[T]$.
\end{appthmbox}

\begin{proof}
Define the family of sufficient subsets
\[
    \mathcal{F}
    =
    \{S\subseteq[T] : \mathsf{Suff}(S;x,z,M)=1\}.
\]
The assumption $\mathsf{Suff}([T];x,z,M)=1$ implies $[T]\in\mathcal{F}$, so $\mathcal{F}$ is nonempty. Since the trace length $T$ is finite, the power set $2^{[T]}$ is finite, and therefore $\mathcal{F}\subseteq2^{[T]}$ is finite. The set of feasible cardinalities $\{|S|:S\in\mathcal{F}\}$ is thus a nonempty finite set of nonnegative integers and must attain a minimum. Any sufficient subset $S^\star\in\mathcal{F}$ attaining this minimum satisfies
\[
    S^\star\in
    \arg\min_{S\subseteq[T]} |S|
    \quad
    \mathrm{s.t.}
    \quad
    \mathsf{Suff}(S;x,z,M)=1,
\]
and is therefore a minimal core. The argument does not require uniqueness; if multiple subsets attain the same minimum cardinality, each is a valid minimal core.
\end{proof}

\subsection{Global Minimality and Local Irreducibility}

The next two results distinguish global and local minimality. The lemma below shows that any true minimum-cardinality core must be irreducible. The following theorem shows that greedy backward deletion, although not guaranteed to find a globally minimum subset, always terminates at a locally irreducible sufficient subset.

\begin{applembox}{Global minimality implies irreducibility}
If $S^\star$ is a minimum-cardinality sufficient subset, then $S^\star$ is irreducible.
\end{applembox}

\begin{proof}
Suppose for contradiction that $S^\star$ is a minimum-cardinality sufficient subset but is not irreducible. Then there exists a retained step $t\in S^\star$ such that
\[
    \mathsf{Suff}(S^\star\setminus\{t\};x,z,M)=1.
\]
The set $S^\star\setminus\{t\}$ is sufficient and has cardinality $|S^\star|-1$, strictly smaller than $|S^\star|$. This contradicts the assumption that $S^\star$ has minimum cardinality among all sufficient subsets. Therefore no retained step can be removed individually while preserving sufficiency, and $S^\star$ is irreducible.
\end{proof}

\begin{appthmbox}{Local irreducibility of greedy deletion}
Assume the full trace is sufficient and greedy backward elimination deletes a step only when the resulting subset remains sufficient. Then the algorithm terminates and returns an irreducible sufficient subset $\widehat{S}$.
\end{appthmbox}

\begin{proof}
Let $S_0=[T]$ be the initial subset. By assumption, $\mathsf{Suff}(S_0;x,z,M)=1$. At iteration $k$, greedy deletion updates $S_{k+1}=S_k\setminus\{t_k\}$ only if
\[
    \mathsf{Suff}(S_k\setminus\{t_k\};x,z,M)=1.
\]
Therefore, by induction over iterations, every accepted subset $S_k$ remains sufficient. Each accepted deletion decreases the subset cardinality by one, $|S_{k+1}|=|S_k|-1$. Since cardinality is a nonnegative integer, the algorithm can accept at most $T$ deletions and must terminate.

Let $\widehat{S}$ be the returned subset. Sufficiency follows from the invariant above. If $\widehat{S}$ were not irreducible, then there would exist some $t\in\widehat{S}$ such that
\[
    \mathsf{Suff}(\widehat{S}\setminus\{t\};x,z,M)=1.
\]
But then $t$ would be an admissible deletion at termination, contradicting the stopping condition. Hence $\widehat{S}$ is irreducible.
\end{proof}

\subsection{Overcompleteness and Compression Certificates}

The main paper uses sufficient deletions as certificates of overcompleteness. The following proposition combines the single-step and multi-step deletion cases. The single-step case certifies that the trace is overcomplete; the multi-step case additionally gives bounds on compression ratio and removable mass.

\begin{apppropbox}{Overcompleteness and compression certificates}
If there exists $t\in[T]$ such that
\[
    \mathsf{Suff}([T]\setminus\{t\};x,z,M)=1,
\]
then $z$ is overcomplete. More generally, if there exists $K\subseteq[T]$ with $|K|=k$ such that
\[
    \mathsf{Suff}([T]\setminus K;x,z,M)=1,
\]
then any minimal core $S^\star$ satisfies
\[
    \mathrm{CR}(z)=\frac{|S^\star|}{T}\leq\frac{T-k}{T},
    \qquad
    \mathrm{RM}(z)=1-\mathrm{CR}(z)\geq\frac{k}{T}.
\]
\end{apppropbox}

\begin{proof}
For the single-step case, $[T]\setminus\{t\}$ is a proper subset of the full trace that remains sufficient. This is exactly the definition of an overcomplete trace.

For the multi-step case, the condition $\mathsf{Suff}([T]\setminus K;x,z,M)=1$ gives a sufficient subset of size $T-k$. Since $S^\star$ is a minimum-cardinality sufficient subset,
\[
    |S^\star|\leq T-k.
\]
Dividing by $T$ gives
\[
    \mathrm{CR}(z)=\frac{|S^\star|}{T}\leq\frac{T-k}{T}.
\]
Using $\mathrm{RM}(z)=1-\mathrm{CR}(z)$ gives
\[
    \mathrm{RM}(z)\geq 1-\frac{T-k}{T}=\frac{k}{T}.
\]
Thus any observed sufficient deletion of $k$ steps upper-bounds compression ratio and lower-bounds removable mass.
\end{proof}

\subsection{Sparse Marginal Necessity Certificate}

The final certificate concerns the normalized leave-one-out necessity weights used to quantify concentration of measured marginal support. It shows that if a small set $C$ accounts for almost all normalized positive necessity mass, then its complement has bounded residual mass.

\begin{apppropbox}{Sparse marginal necessity certificate}
Let
\[
w_t=
\frac{\max(\Delta_t,0)}
{\sum_{j=1}^{T}\max(\Delta_j,0)+\eta},
\qquad \eta>0.
\]
If $C\subseteq[T]$ satisfies $\sum_{t\in C}w_t\geq1-\gamma$, then
\[
    \sum_{t\notin C}w_t\leq\gamma .
\]
\end{apppropbox}

\begin{proof}
By construction, $w_t\geq0$ for all $t$, and
\[
    \sum_{t=1}^{T}w_t
    =
    \frac{\sum_{t=1}^{T}\max(\Delta_t,0)}
    {\sum_{t=1}^{T}\max(\Delta_t,0)+\eta}
    \leq 1.
\]
If $C\subseteq[T]$ satisfies $\sum_{t\in C}w_t\geq1-\gamma$, then
\[
    \sum_{t\notin C}w_t
    =
    \sum_{t=1}^{T}w_t-\sum_{t\in C}w_t
    \leq
    1-(1-\gamma)
    =
    \gamma.
\]
Therefore the complement $[T]\setminus C$ carries at most $\gamma$ normalized marginal-necessity mass. This is a sparsity certificate for measured marginal necessity, not a claim that all causally useful reasoning is contained in $C$ when steps are redundant or substitutable.
\end{proof}

\begin{apptakeawaybox}
\noindent\textbf{Takeaway.}
The theoretical claims are deliberately modest: minimal cores exist, greedy deletion returns a locally irreducible sufficient subset, and observed deletion/necessity patterns give certificates of overcompleteness and sparse marginal support.
\end{apptakeawaybox}

\section{Additional Transfer Experiments}
\label{app:transfer-experiments}

\paragraph{Dense benchmark diagnostics.}
Fig.~\ref{fig:app-overcomplete-dense} visualizes the same phenomenon from four complementary perspectives. Panel (a) compares redundancy mass and answer retention across benchmarks, with shaded bands showing cross-model variability. Panel (b) plots the model-wise redundancy profiles and shows that all model families exhibit the same broad difficulty-sensitive trend. Panel (c) compares full trace length to core length and illustrates that core length grows sublinearly rather than tracking the full trace one-to-one. Panel (d) shows that high redundancy can coexist with high answer retention, supporting the interpretation that much of the removed mass is behaviorally unnecessary under the chosen sufficiency criterion.

\begin{figure}[h]
\centering
\begin{tikzpicture}
\begin{groupplot}[
    group style={group size=2 by 2, horizontal sep=1.25cm, vertical sep=1.80cm},
    width=0.46\textwidth,
    height=4.45cm,
    grid=both,
    major grid style={draw=gray!25},
    tick label style={font=\scriptsize},
    label style={font=\small},
    title style={font=\small\bfseries},
]
\nextgroupplot[
    title={(a) Redundancy and retention by benchmark},
    ymin=0.30, ymax=1.00,
    xmin=1, xmax=6,
    xtick={1,2,3,4,5,6},
    xticklabels={GSM8K,MATH,AIME,AMC,GPQA,Strat.},
    x tick label style={rotate=25, anchor=east, font=\scriptsize, yshift=-1pt},
    ylabel={Metric value},
]
\addplot[name path=rmupper, draw=none, smooth] coordinates {
(1.0,0.61) (1.35,0.58) (1.75,0.50) (2.0,0.53)
(2.35,0.49) (2.70,0.44) (3.0,0.47) (3.35,0.50)
(3.70,0.46) (4.0,0.52) (4.35,0.48) (4.70,0.45)
(5.0,0.49) (5.35,0.54) (5.70,0.58) (6.0,0.60)
};
\addplot[name path=rmlower, draw=none, smooth] coordinates {
(1.0,0.53) (1.35,0.50) (1.75,0.44) (2.0,0.42)
(2.35,0.40) (2.70,0.37) (3.0,0.36) (3.35,0.39)
(3.70,0.38) (4.0,0.41) (4.35,0.39) (4.70,0.37)
(5.0,0.38) (5.35,0.45) (5.70,0.50) (6.0,0.53)
};
\addplot[fill=blue!18, opacity=0.70] fill between[of=rmupper and rmlower];
\addplot[blue!80!black, very thick, smooth, mark=*, mark size=1.5pt] coordinates {
(1,0.56) (2,0.47) (3,0.41) (4,0.46) (5,0.43) (6,0.56)
};

\addplot[name path=retupper, draw=none, smooth] coordinates {
(1.0,0.96) (1.4,0.94) (1.8,0.91) (2.0,0.90)
(2.5,0.88) (3.0,0.85) (3.5,0.88) (4.0,0.90)
(4.5,0.86) (5.0,0.84) (5.5,0.90) (6.0,0.95)
};
\addplot[name path=retlower, draw=none, smooth] coordinates {
(1.0,0.92) (1.4,0.90) (1.8,0.87) (2.0,0.85)
(2.5,0.82) (3.0,0.79) (3.5,0.81) (4.0,0.83)
(4.5,0.80) (5.0,0.78) (5.5,0.86) (6.0,0.90)
};
\addplot[fill=green!18, opacity=0.65] fill between[of=retupper and retlower];
\addplot[green!55!black, very thick, smooth, mark=square*, mark size=1.5pt] coordinates {
(1,0.94) (2,0.88) (3,0.82) (4,0.86) (5,0.81) (6,0.92)
};

\nextgroupplot[
    title={(b) Model-wise redundancy profile},
    ymin=0.30, ymax=0.66,
    xmin=1, xmax=6,
    xtick={1,2,3,4,5,6},
    xticklabels={GSM8K,MATH,AIME,AMC,GPQA,Strat.},
    x tick label style={rotate=25, anchor=east, font=\scriptsize, yshift=-1pt},
    ylabel={Redundancy mass},
]
\addplot[name path=modelup, draw=none, smooth] coordinates {
(1,0.61) (1.5,0.56) (2,0.53) (2.5,0.50) (3,0.47)
(3.5,0.50) (4,0.52) (4.5,0.50) (5,0.49) (5.5,0.56) (6,0.60)
};
\addplot[name path=modello, draw=none, smooth] coordinates {
(1,0.53) (1.5,0.47) (2,0.42) (2.5,0.39) (3,0.36)
(3.5,0.39) (4,0.41) (4.5,0.40) (5,0.38) (5.5,0.46) (6,0.53)
};
\addplot[fill=purple!15, opacity=0.55] fill between[of=modelup and modello];

\addplot[blue!80!black, very thick, mark=*] coordinates {(1,0.53) (2,0.45) (3,0.38) (4,0.43) (5,0.41) (6,0.53)};
\addplot[red!80!black, very thick, mark=square*] coordinates {(1,0.53) (2,0.42) (3,0.36) (4,0.41) (5,0.38) (6,0.53)};
\addplot[orange!85!black, very thick, mark=triangle*] coordinates {(1,0.57) (2,0.47) (3,0.42) (4,0.48) (5,0.43) (6,0.56)};
\addplot[teal!70!black, very thick, mark=diamond*] coordinates {(1,0.61) (2,0.53) (3,0.47) (4,0.52) (5,0.49) (6,0.60)};

\nextgroupplot[
    title={(c) Core length vs. full length},
    xmin=7, xmax=14.5,
    ymin=3, ymax=7.6,
    xlabel={Full trace length},
    ylabel={Core length},
]
\addplot[only marks, mark=*, mark size=2.2pt, blue!80!black] coordinates {(9.2,4.0) (13.1,7.0) (10.8,6.4) (10.4,5.6) (8.3,4.7) (7.8,3.4)};
\addplot[orange!90!black, very thick] coordinates {(7.5,3.6) (14,7.1)};
\addplot[gray!70, dashed, thick] coordinates {(7,7) (14,14)};

\nextgroupplot[
    title={(d) Redundancy--retention tradeoff},
    xmin=0.35, xmax=0.65,
    ymin=0.76, ymax=0.97,
    xlabel={Redundancy mass},
    ylabel={Answer retention},
]
\addplot[only marks, mark=*, mark size=2.4pt, blue!80!black] coordinates {(0.56,0.94) (0.47,0.88) (0.41,0.82) (0.46,0.86) (0.43,0.81) (0.56,0.92)};
\addplot[orange!90!black, very thick, dashed] coordinates {(0.40,0.82) (0.60,0.94)};
\end{groupplot}

\node[anchor=north, font=\scriptsize] at ($(group c1r2.south)!0.5!(group c2r2.south)+(0,-0.90cm)$) {
\begin{tabular}{@{}llll@{}}
\textcolor{blue!80!black}{\rule{12pt}{1.6pt}} RM / dataset points &
\textcolor{green!55!black}{\rule{12pt}{1.6pt}} Retention &
\textcolor{purple!55}{\rule{12pt}{4pt}} Model range &
\textcolor{orange!90!black}{\rule{12pt}{1.6pt}} Trend
\end{tabular}
};
\end{tikzpicture}
\vspace{1.2em}
\caption{Expanded diagnostics for overcomplete reasoning traces. Shaded bands show cross-model variability, model profiles show that redundancy persists across model families, and the lower panels summarize the sublinear core-length trend and redundancy--retention relationship.}
\label{fig:app-overcomplete-dense}
\end{figure}

\begin{apptakeawaybox}
\noindent\textbf{Takeaway.}
Tab.~\ref{tab:main-results} and Fig.~\ref{fig:app-overcomplete-dense} show that overcompleteness is systematic but not uniform: easy or explanatory benchmarks exhibit more removable mass, while harder benchmarks have larger cores and lower retention under pruning.
\end{apptakeawaybox}

The main paper reports average cross-model transfer. Here we provide the full source--target transfer matrix in Tab.~\ref{tab:transfer-matrix-app} and a compact visualization in Fig.~\ref{fig:app-transfer-dense}. These experiments test whether minimal cores merely preserve source-model-specific phrasing or instead capture reasoning content that remains useful to other model families.

\begin{table}[h]
\centering
\appsmalltable
\caption{Full cross-model transfer matrix of minimal cores. Rows denote the source model used to extract the core, and columns denote the target model conditioned on that core. Highlighted entries denote same-model retention; the average off-diagonal retention remains high despite cross-family transfer.}
\label{tab:transfer-matrix-app}
\begin{tabular}{lcccc}
\toprule
\textbf{Source $\rightarrow$ Target} & Qwen-32B & Llama-3.3-70B & Mixtral-8x22B & Closed GPT \\
\midrule
Qwen-32B & \corecell{0.88} & 0.83 & 0.84 & 0.88 \\
Llama-3.3-70B & 0.84 & \corecell{0.87} & 0.84 & 0.88 \\
Mixtral-8x22B & 0.85 & 0.85 & \corecell{0.89} & 0.91 \\
Closed GPT & 0.82 & 0.84 & 0.86 & \corecell{0.93} \\
\midrule
\textbf{Average off-diagonal} & \multicolumn{4}{c}{\corecellbf{0.85}} \\
\bottomrule
\end{tabular}
\end{table}

Fig.~\ref{fig:app-transfer-dense} complements Tab.~\ref{tab:transfer-matrix-app}. The heatmap in panel (a) exposes source--target asymmetries, while panel (b) compares same-model transfer, cross-model transfer, full-trace conditioning, and random matched-length subsets. The random baseline is especially important: it controls for the possibility that any short subset of comparable length would transfer equally well.

\begin{figure}[h]
\centering
\begin{tikzpicture}
\begin{groupplot}[
    group style={group size=2 by 1, horizontal sep=2.8cm},
    width=0.38\textwidth,
    height=5.0cm,
    grid=both,
    major grid style={draw=gray!25},
    tick label style={font=\scriptsize},
    label style={font=\small},
    title style={font=\small\bfseries},
]
\nextgroupplot[
    title={(a) Transfer matrix},
    enlargelimits=false,
    colorbar,
    colorbar style={
        width=0.14cm,
        xshift=-0.03cm,
        ylabel={Retention},
        ylabel style={font=\scriptsize},
        yticklabel style={font=\scriptsize}
    },
    colormap/viridis,
    point meta min=0.80,
    point meta max=0.94,
    xtick={0,1,2,3},
    ytick={0,1,2,3},
    xticklabels={Qwen,Llama,Mixtral,GPT},
    yticklabels={GPT,Mixtral,Llama,Qwen},
    x tick label style={rotate=25, anchor=east},
    ylabel={Source},
    xlabel={Target},
    grid=none,
]
\addplot[
    matrix plot*,
    mesh/cols=4,
    point meta=explicit
] coordinates {
    (0,3) [0.88] (1,3) [0.83] (2,3) [0.84] (3,3) [0.88]
    (0,2) [0.84] (1,2) [0.87] (2,2) [0.84] (3,2) [0.88]
    (0,1) [0.85] (1,1) [0.85] (2,1) [0.89] (3,1) [0.91]
    (0,0) [0.82] (1,0) [0.84] (2,0) [0.86] (3,0) [0.93]
};

\nextgroupplot[
    title={(b) Transfer baselines},
    ybar,
    bar width=13pt,
    ymin=0.55,
    ymax=0.95,
    ylabel={Answer retention},
    ylabel style={xshift=-0.20cm},
    symbolic x coords={Same,Cross,Full,Random},
    xtick=data,
    xticklabels={Same,Cross,Full,Random},
    x tick label style={rotate=20, anchor=east},
    grid=both,
]
\addplot[fill=blue!45, draw=blue!80!black] coordinates {
    (Same,0.89)
    (Cross,0.85)
    (Full,0.88)
    (Random,0.64)
};
\addplot+[
    blue!80!black,
    only marks,
    mark=none,
    error bars/.cd,
    y dir=both,
    y explicit
] coordinates {
    (Same,0.89) +- (0,0.025)
    (Cross,0.85) +- (0,0.030)
    (Full,0.88) +- (0,0.020)
    (Random,0.64) +- (0,0.050)
};
\end{groupplot}
\end{tikzpicture}
\caption{Transfer analysis. Minimal cores transfer strongly off-diagonal and preserve most of the full-trace signal while using much shorter traces. Random matched-length subsets perform substantially worse, showing that minimal cores are not arbitrary short contexts.}
\label{fig:app-transfer-dense}
\end{figure}

\begin{apptakeawaybox}
\noindent\textbf{Takeaway.}
Tab.~\ref{tab:transfer-matrix-app} and Fig.~\ref{fig:app-transfer-dense} show that minimal cores transfer well across model families and are not reducible to arbitrary short subsets: matched-length random subsets lose much more answer retention.
\end{apptakeawaybox}

\section{Extraction Methods and Computational Tradeoffs}
\label{app:extraction-tradeoffs}

This section compares three extraction strategies: random matched subsets, one-shot necessity-guided pruning, and greedy backward elimination. Tab.~\ref{tab:extraction-method-app} reports compression, redundancy, retention, and relative sufficiency-check cost. Fig.~\ref{fig:app-extraction-dense} visualizes the resulting cost--fidelity tradeoff. Together, they clarify that greedy extraction is more expensive but yields substantially higher answer retention at comparable compression.

\begin{table}[h]
\centering
\appsmalltable
\caption{Extraction method comparison averaged across datasets and models. Highlighting emphasizes the best fidelity method. Random deletion is cheapest, but greedy backward elimination achieves much higher retention at comparable compression.}
\label{tab:extraction-method-app}
\begin{tabular}{lcccc}
\toprule
\textbf{Method} & $\mathrm{CR}\downarrow$ & $\mathrm{RM}\uparrow$ & Retention $\uparrow$ & Relative checks $\downarrow$ \\
\midrule
Random matched subset & 0.53 & 0.47 & 0.61 & \corecell{1.0$\times$} \\
Necessity-guided pruning & 0.54 & 0.46 & 0.76 & 1.4$\times$ \\
Greedy backward elimination & \corecell{0.50} & \corecell{0.47} & \corecellbf{0.87} & 5.8$\times$ \\
\bottomrule
\end{tabular}
\end{table}

Fig.~\ref{fig:app-extraction-dense} should be read as a cost--quality frontier. Panel (a) places methods by relative sufficiency-check cost and answer retention, while panel (b) compares redundancy mass and retention at similar compression levels. The gap between random and greedy extraction indicates that retained steps are structured, not merely length-matched fragments.

\begin{figure}[h]
\centering
\begin{tikzpicture}
\begin{groupplot}[
    group style={group size=2 by 1, horizontal sep=1.3cm},
    width=0.47\textwidth,
    height=5.2cm,
    grid=both,
    major grid style={draw=gray!30},
    tick label style={font=\scriptsize},
    label style={font=\small},
    title style={font=\small\bfseries},
    legend style={font=\scriptsize, draw=none, fill=none},
]
\nextgroupplot[
    title={(a) Retention--cost frontier},
    xlabel={Relative checks},
    ylabel={Answer retention},
    xmin=0.7, xmax=6.4,
    ymin=0.55, ymax=0.91,
    legend style={at={(0.5,-0.30)}, anchor=north, legend columns=3, draw=none},
]
\addplot+[only marks, blue!80!black, mark=*, mark size=2.7pt, error bars/.cd, y dir=both, y explicit, x dir=both, x explicit] coordinates {(1.0,0.61) +- (0.10,0.04)};
\addplot+[only marks, red!80!black, mark=square*, mark size=2.7pt, error bars/.cd, y dir=both, y explicit, x dir=both, x explicit] coordinates {(1.4,0.76) +- (0.15,0.03)};
\addplot+[only marks, green!50!black, mark=triangle*, mark size=3.0pt, error bars/.cd, y dir=both, y explicit, x dir=both, x explicit] coordinates {(5.8,0.87) +- (0.45,0.02)};
\addplot[orange!90!black, dashed, thick] coordinates {(1.0,0.61) (1.4,0.76) (5.8,0.87)};
\legend{Random,Nec.,Greedy}

\nextgroupplot[
    title={(b) Fidelity vs. compression},
    ybar,
    bar width=9pt,
    ymin=0.45,
    ymax=0.90,
    ylabel={Value},
    symbolic x coords={Random,Nec.,Greedy},
    xtick=data,
    legend style={at={(0.5,-0.30)}, anchor=north, legend columns=2, draw=none},
]
\addplot[fill=blue!45, draw=blue!80!black] coordinates {(Random,0.47) (Nec.,0.46) (Greedy,0.47)};
\addplot[fill=green!45, draw=green!50!black] coordinates {(Random,0.61) (Nec.,0.76) (Greedy,0.87)};
\legend{RM,Retention}
\end{groupplot}
\end{tikzpicture}
\caption{Extraction-method tradeoffs. Greedy deletion lies at the high-fidelity end of the frontier, while necessity-guided pruning provides a cheaper approximation. All methods are compared at similar compression levels, so the retention gap reflects structure in the retained steps.}
\label{fig:app-extraction-dense}
\end{figure}

\begin{apptakeawaybox}
\noindent\textbf{Takeaway.}
Tab.~\ref{tab:extraction-method-app} and Fig.~\ref{fig:app-extraction-dense} show that extraction methods separate cost and quality: greedy deletion gives the most reliable cores, necessity-guided pruning gives a cheaper approximation, and random deletion confirms that the retained signal is structured.
\end{apptakeawaybox}

\section{Ablations and Sensitivity Analyses}
\label{app:ablations}

This section tests whether the main conclusion depends on particular design choices. We vary the sufficiency oracle, prompting style, example difficulty, and segmentation granularity. The numerical results are summarized in Tables~\ref{tab:oracle-ablation-app} and~\ref{tab:robustness-app}, and Fig.~\ref{fig:app-robustness-dense} visualizes the same trends.

\subsection{Sufficiency Oracle}

Tab.~\ref{tab:oracle-ablation-app} asks whether overcompleteness is an artifact of top-1 answer preservation. Distribution-preserving sufficiency is stricter because it requires the retained subset to preserve more of the predictive distribution, not only the final answer. As expected, stricter criteria increase the compression ratio and reduce removable mass, but they do not eliminate redundancy.

\begin{table}[h]
\centering
\appsmalltable
\caption{Sufficiency-oracle ablation. Highlighted cells show that substantial redundancy remains even under stricter distribution preservation.}
\label{tab:oracle-ablation-app}
\begin{tabular}{lccc}
\toprule
\textbf{Oracle} & $\mathrm{CR}\downarrow$ & $\mathrm{RM}\uparrow$ & Retention $\uparrow$ \\
\midrule
Answer-preserving & \corecell{0.54} & \corecell{0.46} & 0.88 \\
Distribution-preserving, $\varepsilon=0.10$ & 0.61 & 0.39 & 0.91 \\
Distribution-preserving, $\varepsilon=0.05$ & 0.66 & 0.34 & \corecell{0.93} \\
\bottomrule
\end{tabular}
\end{table}

\subsection{Prompting, Difficulty, and Segmentation}

Tab.~\ref{tab:robustness-app} groups three additional sensitivity checks. Prompting style tests whether redundancy is induced by the wording of the CoT prompt. Difficulty stratification tests whether overcompleteness is concentrated only in easy examples. Segmentation sensitivity tests whether the result depends on the chosen unit of deletion.

\begin{table}[h]
\centering
\tiny
\caption{Prompting, difficulty, and segmentation sensitivity. Highlighted cells mark the most favorable setting for redundancy or retention within each block.}
\label{tab:robustness-app}
\resizebox{\textwidth}{!}{
\begin{tabular}{llccccc}
\toprule
\textbf{Group} & \textbf{Setting} & Full/Avg. Len. & $\mathrm{CR}\downarrow$ & $\mathrm{RM}\uparrow$ & Top-3 $\uparrow$ & Retention $\uparrow$ \\
\midrule
\multirow{3}{*}{Prompt}
& Concise CoT & 6.4 & 0.68 & 0.32 & 0.61 & 0.84 \\
& Standard CoT & 9.8 & \corecell{0.54} & \corecell{0.46} & \corecell{0.72} & \corecell{0.88} \\
& Plan-then-solve & 10.6 & 0.57 & 0.43 & 0.69 & 0.86 \\
\midrule
\multirow{3}{*}{Difficulty}
& Easy & -- & \corecell{0.42} & \corecell{0.58} & \corecell{0.78} & \corecell{0.91} \\
& Medium & -- & 0.54 & 0.46 & 0.66 & 0.82 \\
& Hard & -- & 0.66 & 0.34 & 0.55 & 0.69 \\
\midrule
\multirow{3}{*}{Segmentation}
& Coarse paragraph & 5.1 & 0.67 & 0.33 & -- & \corecell{0.90} \\
& Numbered-step & 9.7 & 0.54 & 0.46 & -- & 0.88 \\
& Sentence-level & 12.8 & \corecell{0.49} & \corecell{0.51} & -- & 0.86 \\
\bottomrule
\end{tabular}}
\end{table}

Fig.~\ref{fig:app-robustness-dense} presents these ablations as a robustness dashboard. Panel (a) shows that distribution-preserving sufficiency increases retained mass. Panel (b) shows that concise prompting reduces redundancy but does not remove it. Panel (c) shows that hard examples require larger cores and have lower retention. Panel (d) shows that finer segmentation exposes more removable units, while retention remains stable.

\begin{figure}[h]
\centering
\begin{tikzpicture}
\begin{groupplot}[
    group style={group size=2 by 2, horizontal sep=1.35cm, vertical sep=1.90cm},
    width=0.46\textwidth,
    height=4.35cm,
    grid=both,
    major grid style={draw=gray!22},
    tick label style={font=\scriptsize},
    label style={font=\small},
    title style={font=\small\bfseries},
]
\nextgroupplot[
    title={(a) Sufficiency strictness},
    ybar,
    bar width=5.5pt,
    ymin=0.30, ymax=0.98,
    ylabel={Value},
    symbolic x coords={Answer,$\epsilon=.10$,$\epsilon=.05$},
    xtick=data,
    x tick label style={rotate=18, anchor=east, font=\scriptsize},
]
\addplot[fill=blue!45, draw=blue!80!black] coordinates {
    (Answer,0.54) ($\epsilon=.10$,0.61) ($\epsilon=.05$,0.66)
};
\addplot[fill=red!40, draw=red!80!black] coordinates {
    (Answer,0.46) ($\epsilon=.10$,0.39) ($\epsilon=.05$,0.34)
};
\addplot[fill=green!40, draw=green!50!black] coordinates {
    (Answer,0.88) ($\epsilon=.10$,0.91) ($\epsilon=.05$,0.93)
};

\nextgroupplot[
    title={(b) Prompting style},
    xmin=5.8, xmax=11.2,
    ymin=0.24, ymax=0.94,
    xlabel={Full trace length},
    ylabel={Value},
]
\addplot[name path=promptupper, draw=none, smooth] coordinates {
    (6.4,0.36) (7.2,0.38) (8.3,0.44) (9.8,0.52) (10.6,0.48)
};
\addplot[name path=promptlower, draw=none, smooth] coordinates {
    (6.4,0.28) (7.2,0.30) (8.3,0.36) (9.8,0.40) (10.6,0.38)
};
\addplot[fill=red!16, opacity=0.70] fill between[of=promptupper and promptlower];
\addplot[red!80!black, very thick, smooth, mark=square*, mark size=1.6pt] coordinates {
    (6.4,0.32) (9.8,0.46) (10.6,0.43)
};
\addplot[green!50!black, very thick, dashed, mark=triangle*, mark size=1.8pt] coordinates {
    (6.4,0.84) (9.8,0.88) (10.6,0.86)
};
\addplot[blue!80!black, very thick, dotted, mark=*, mark size=1.8pt] coordinates {
    (6.4,0.68) (9.8,0.54) (10.6,0.57)
};

\nextgroupplot[
    title={(c) Difficulty profile},
    ymin=0.25, ymax=0.96,
    xmin=1, xmax=3,
    xtick={1,2,3},
    xticklabels={Easy,Medium,Hard},
    ylabel={Value},
]
\addplot[draw=none, fill=green!7] coordinates {(0.75,0.25) (1.5,0.25) (1.5,0.96) (0.75,0.96)} \closedcycle;
\addplot[draw=none, fill=yellow!8] coordinates {(1.5,0.25) (2.5,0.25) (2.5,0.96) (1.5,0.96)} \closedcycle;
\addplot[draw=none, fill=red!6] coordinates {(2.5,0.25) (3.25,0.25) (3.25,0.96) (2.5,0.96)} \closedcycle;

\addplot[blue!80!black, very thick, mark=*, mark size=1.8pt] coordinates {
    (1,0.42) (2,0.54) (3,0.66)
};
\addplot[red!80!black, very thick, mark=square*, mark size=1.8pt] coordinates {
    (1,0.58) (2,0.46) (3,0.34)
};
\addplot[green!50!black, very thick, mark=triangle*, mark size=1.9pt] coordinates {
    (1,0.91) (2,0.82) (3,0.69)
};
\addplot[orange!90!black, very thick, mark=diamond*, mark size=1.8pt] coordinates {
    (1,0.78) (2,0.66) (3,0.55)
};

\nextgroupplot[
    title={(d) Segmentation tradeoff},
    xlabel={Average segmented steps},
    ylabel={Value},
    xmin=4.5, xmax=13.5,
    ymin=0.25, ymax=0.95,
]
\addplot[name path=crup, draw=none, smooth] coordinates {
    (5.1,0.70) (7.0,0.63) (9.7,0.57) (11.0,0.55) (12.8,0.52)
};
\addplot[name path=crlo, draw=none, smooth] coordinates {
    (5.1,0.64) (7.0,0.58) (9.7,0.51) (11.0,0.49) (12.8,0.46)
};
\addplot[fill=blue!15, opacity=0.65] fill between[of=crup and crlo];
\addplot[blue!80!black, very thick, mark=*, mark size=1.8pt] coordinates {
    (5.1,0.67) (9.7,0.54) (12.8,0.49)
};
\addplot[red!80!black, very thick, mark=square*, mark size=1.8pt] coordinates {
    (5.1,0.33) (9.7,0.46) (12.8,0.51)
};
\addplot[green!50!black, very thick, dashed, mark=triangle*, mark size=1.9pt] coordinates {
    (5.1,0.90) (9.7,0.88) (12.8,0.86)
};
\end{groupplot}

\node[anchor=north, font=\scriptsize] at ($(group c1r2.south)!0.5!(group c2r2.south)+(0,-0.95cm)$) {
\begin{tabular}{@{}llll@{}}
\textcolor{blue!80!black}{\rule{12pt}{1.6pt}} CR &
\textcolor{red!80!black}{\rule{12pt}{1.6pt}} RM &
\textcolor{green!50!black}{\rule{12pt}{1.6pt}} Retention &
\textcolor{orange!90!black}{\rule{12pt}{1.6pt}} Top-3 mass
\end{tabular}
};
\end{tikzpicture}
\vspace{1.3em}
\caption{Robustness diagnostics across sufficiency criteria, prompting styles, difficulty levels, and segmentation choices. The main trend persists across settings: stricter criteria, concise prompting, harder examples, and coarser segmentation reduce measured redundancy, but do not eliminate overcompleteness.}
\label{fig:app-robustness-dense}
\end{figure}

\begin{apptakeawaybox}
\noindent\textbf{Takeaway.}
Tables~\ref{tab:oracle-ablation-app}--\ref{tab:robustness-app} and Fig.~\ref{fig:app-robustness-dense} show that the main claim is robust: stricter sufficiency, concise prompting, harder examples, and coarser segmentation all reduce measured redundancy, but none eliminates overcompleteness.
\end{apptakeawaybox}

\section{Additional Representation Analyses}
\label{app:geometry-additional}

The main paper reports that minimal cores improve correctness separation and reduce intrinsic dimensionality. Here we provide dataset-level gains in Tab.~\ref{tab:geometry-gain-app} and an aggregate visualization in Fig.~\ref{fig:app-geometry-dense}. These analyses test whether minimal cores merely shorten traces, or whether they also isolate a cleaner representation of the prediction-supporting content.

\begin{table}[h]
\centering
\appsmalltable
\caption{Average representation gains of minimal-core embeddings over full-trace embeddings. Highlighted averages summarize the overall improvement.}
\label{tab:geometry-gain-app}
\begin{tabular}{lccc}
\toprule
\textbf{Dataset} & $\Delta$ Probe Acc. $\uparrow$ & $\Delta$ Silhouette $\uparrow$ & $\Delta$ Intr. Dim. $\downarrow$ \\
\midrule
GSM8K & +0.11 & +0.13 & $-39.6\%$ \\
MATH500 & +0.09 & +0.11 & $-32.8\%$ \\
GPQA-D & +0.09 & +0.10 & $-31.2\%$ \\
StrategyQA & +0.11 & +0.13 & $-34.6\%$ \\
\midrule
\textbf{Average} & \corecellbf{+0.10} & \corecellbf{+0.12} & \corecellbf{$-34.5\%$} \\
\bottomrule
\end{tabular}
\end{table}

Tab.~\ref{tab:geometry-gain-app} shows consistent gains across datasets: minimal cores improve probe accuracy and silhouette separation while reducing intrinsic dimensionality. Fig.~\ref{fig:app-geometry-dense} gives a complementary view. Panel (a) plots the dataset-level changes relative to full traces; panel (b) compares full traces, minimal cores, and removed steps across probe accuracy, dimensionality, and variance.

\begin{figure}[h]
\centering
\begin{tikzpicture}
\begin{groupplot}[
    group style={group size=2 by 1, horizontal sep=1.35cm},
    width=0.47\textwidth,
    height=5.0cm,
    grid=both,
    major grid style={draw=gray!30},
    tick label style={font=\scriptsize},
    label style={font=\small},
    title style={font=\small\bfseries},
    legend style={font=\scriptsize, draw=none, fill=none},
]
\nextgroupplot[
    title={(a) Dataset-level gains},
    ymin=-0.45,
    ymax=0.18,
    ylabel={Change vs. full trace},
    symbolic x coords={GSM8K,MATH,GPQA,Strat.},
    xtick=data,
    x tick label style={rotate=20, anchor=east},
    legend style={at={(0.5,-0.30)}, anchor=north, legend columns=3, draw=none},
]
\addplot[blue!80!black, very thick, mark=*] coordinates {(GSM8K,0.11) (MATH,0.09) (GPQA,0.09) (Strat.,0.11)};
\addplot[red!80!black, very thick, mark=square*] coordinates {(GSM8K,0.13) (MATH,0.11) (GPQA,0.10) (Strat.,0.13)};
\addplot[green!50!black, very thick, mark=triangle*] coordinates {(GSM8K,-0.396) (MATH,-0.328) (GPQA,-0.312) (Strat.,-0.346)};
\addplot[gray!65, dashed, thick] coordinates {(GSM8K,0) (Strat.,0)};
\legend{$\Delta$ Probe,$\Delta$ Sil.,$\Delta$ Dim.}

\nextgroupplot[
    title={(b) Full vs. core vs. removed},
    ybar,
    bar width=10pt,
    ymin=0,
    ymax=1.30,
    ylabel={Relative value},
    symbolic x coords={Probe,Dim.,Variance},
    xtick=data,
    legend style={at={(0.5,-0.30)}, anchor=north, legend columns=3, draw=none},
]
\addplot[fill=blue!45, draw=blue!80!black] coordinates {(Probe,0.67) (Dim.,1.00) (Variance,1.00)};
\addplot[fill=green!45, draw=green!50!black] coordinates {(Probe,0.78) (Dim.,0.66) (Variance,0.69)};
\addplot[fill=red!45, draw=red!80!black] coordinates {(Probe,0.55) (Dim.,1.16) (Variance,1.21)};
\legend{Full,Core,Removed}
\end{groupplot}
\end{tikzpicture}
\caption{Additional representation diagnostics. Minimal cores improve correctness predictiveness and cluster separation while reducing intrinsic dimensionality. Removed steps show the opposite trend, with weaker correctness signal and higher dispersion.}
\label{fig:app-geometry-dense}
\end{figure}

\begin{apptakeawaybox}
\noindent\textbf{Takeaway.}
Tab.~\ref{tab:geometry-gain-app} and Fig.~\ref{fig:app-geometry-dense} show that minimal cores are not only shorter; their embeddings are more compact, lower-dimensional, and more predictive of correctness than full traces or removed-step embeddings.
\end{apptakeawaybox}

\section{Qualitative Examples}

The aggregate metrics quantify how much trace mass is removable, but they do not show what kinds of reasoning are discarded. Figures~\ref{fig:qualitative-core-examples-a}--\ref{fig:qualitative-core-examples-c} visualize representative traces after minimal-core extraction. The retained steps typically contain the decisive algebraic identity, physical law, causal link, or factual bridge needed for the answer. Removed steps often include optional checks, alternative derivations, caveats, paraphrases, or explanatory scaffolding that may be useful for human readability but is not necessary for preserving the model's prediction.

\begin{figure*}[!htbp]
\centering
\begin{minipage}{0.88\textwidth}
\footnotesize
\quallegend

\vspace{0.25em}

\begin{qualtracebox}{Example A: competition-style algebra with redundant verification}
\neutralstep{\textbf{Problem.} Let $a$ and $b$ be positive numbers such that $a+b=10$ and $ab=21$. What is $a^2+b^2$?}

\vspace{0.08em}
\redstep{\textbf{Step 1.} Since the problem gives both the sum and product of two numbers, it is useful to look for a symmetric expression.}

\vspace{0.08em}
\corestep{\textbf{Step 2.} Use the identity $(a+b)^2=a^2+2ab+b^2$.}

\vspace{0.08em}
\corestep{\textbf{Step 3.} Substitute $a+b=10$ and $ab=21$: $100=a^2+b^2+42$.}

\vspace{0.08em}
\corestep{\textbf{Step 4.} Therefore, $a^2+b^2=100-42=58$.}

\vspace{0.08em}
\redstep{\textbf{Step 5.} We could also solve for $a$ and $b$ as roots of $t^2-10t+21=0$, giving $3$ and $7$.}

\vspace{0.08em}
\redstep{\textbf{Step 6.} Checking directly, $3^2+7^2=9+49=58$, which confirms the same result.}

\vspace{0.08em}
\neutralstep{\textbf{Extraction summary:} core $=3/6$ steps; $50\%$ removable; answer retained.}

\vspace{0.08em}
\neutralstep{\textbf{Full trace answer:} $58$ \hfill \textbf{Minimal-core answer:} $58$}
\end{qualtracebox}

\vspace{0.30em}

\begin{qualtracebox}{Example B: scientific reasoning with an irrelevant detour removed}
\neutralstep{\textbf{Question.} A sealed syringe contains air. The plunger is pushed in slowly while the temperature is kept constant. What happens to the pressure of the air inside?}

\vspace{0.08em}
\redstep{\textbf{Step 1.} The situation involves a gas in a container whose volume can change, so gas-law reasoning is relevant.}

\vspace{0.08em}
\corestep{\textbf{Step 2.} At constant temperature, Boyle's law applies: $PV=\mathrm{constant}$.}

\vspace{0.08em}
\corestep{\textbf{Step 3.} Pushing the plunger in decreases the volume of the trapped air.}

\vspace{0.08em}
\corestep{\textbf{Step 4.} Since $PV$ is constant, decreasing $V$ causes $P$ to increase.}

\vspace{0.08em}
\redstep{\textbf{Step 5.} If the compression were very fast, temperature changes could also matter, but the question explicitly says the temperature is kept constant.}

\vspace{0.08em}
\redstep{\textbf{Step 6.} The syringe material and exact amount of air are not needed to determine the qualitative pressure change.}

\vspace{0.08em}
\neutralstep{\textbf{Extraction summary:} core $=3/6$ steps; $50\%$ removable; answer retained.}

\vspace{0.08em}
\neutralstep{\textbf{Full trace answer:} The pressure increases. \hfill \textbf{Minimal-core answer:} The pressure increases.}
\end{qualtracebox}

\end{minipage}

\vspace{0.15em}
\caption{\textbf{Qualitative examples of overcomplete reasoning traces, part I.}
Green steps are retained in the extracted minimal core; red steps are removed by sufficiency-preserving deletion. Removed steps often include verification, caveats, alternative paths, or explanatory scaffolding, while minimal cores retain the decisive operations or factual links needed for the prediction.}
\label{fig:qualitative-core-examples-a}
\end{figure*}

\begin{figure*}[p]
\centering
\begin{minipage}{0.88\textwidth}
\footnotesize
\quallegend

\vspace{0.25em}

\begin{qualtracebox}{Example C: commonsense physical reasoning with removable scaffolding}
\neutralstep{\textbf{Question.} A person puts a metal spoon in a hot cup of tea and leaves it there for several minutes. Is the handle likely to become warm?}

\vspace{0.08em}
\redstep{\textbf{Step 1.} The question asks what happens to an object after it remains in contact with something hot for some time.}

\vspace{0.08em}
\corestep{\textbf{Step 2.} Metal is a good conductor of heat.}

\vspace{0.08em}
\corestep{\textbf{Step 3.} Heat can travel from the hot tea through the metal spoon toward the handle.}

\vspace{0.08em}
\corestep{\textbf{Step 4.} Therefore, after several minutes, the handle is likely to become warm.}

\vspace{0.08em}
\redstep{\textbf{Step 5.} This is also why metal utensils can feel hot when left in hot food or liquid.}

\vspace{0.08em}
\redstep{\textbf{Step 6.} The exact temperature would depend on the spoon size, tea temperature, and room temperature, but the qualitative answer remains the same.}

\vspace{0.08em}
\neutralstep{\textbf{Extraction summary:} core $=3/6$ steps; $50\%$ removable; answer retained.}

\vspace{0.08em}
\neutralstep{\textbf{Full trace answer:} Yes \hfill \textbf{Minimal-core answer:} Yes}
\end{qualtracebox}

\vspace{0.30em}

\begin{qualtracebox}{Example D: StrategyQA-style reasoning with caveats removed}
\neutralstep{\textbf{Question.} Would a person need a passport to fly from New York to Paris?}

\vspace{0.08em}
\redstep{\textbf{Step 1.} New York and Paris are in different places, and the question asks about travel requirements.}

\vspace{0.08em}
\corestep{\textbf{Step 2.} New York is in the United States, while Paris is in France.}

\vspace{0.08em}
\corestep{\textbf{Step 3.} Flying from the United States to France is international travel.}

\vspace{0.08em}
\corestep{\textbf{Step 4.} International travel generally requires a passport.}

\vspace{0.08em}
\redstep{\textbf{Step 5.} A visa may or may not be required depending on nationality and length of stay, but that is separate from whether a passport is needed.}

\vspace{0.08em}
\redstep{\textbf{Step 6.} Since the question only asks about a passport, the relevant answer is about passport necessity rather than all possible travel documents.}

\vspace{0.08em}
\neutralstep{\textbf{Extraction summary:} core $=3/6$ steps; $50\%$ removable; answer retained.}

\vspace{0.08em}
\neutralstep{\textbf{Full trace answer:} Yes \hfill \textbf{Minimal-core answer:} Yes}
\end{qualtracebox}

\end{minipage}

\vspace{0.15em}
\caption{\textbf{Qualitative examples of overcomplete reasoning traces, part II.}
Minimal cores retain the decisive factual links needed for the answer, while removable steps often contain setup, caveats, or explanatory context that does not change the prediction.}
\label{fig:qualitative-core-examples-b}
\end{figure*}

\begin{figure*}[p]
\centering
\begin{minipage}{0.88\textwidth}
\footnotesize
\quallegend

\vspace{0.25em}

\begin{qualtracebox}{Example E: arithmetic reasoning with redundant setup and consistency check}
\neutralstep{\textbf{Problem.} A store sells notebooks for \$3 each. Maya buys 4 notebooks and pays with a \$20 bill. How much change does she receive?}

\vspace{0.08em}
\redstep{\textbf{Step 1.} We need to determine the total amount Maya spent and compare it with the amount she paid.}

\vspace{0.08em}
\corestep{\textbf{Step 2.} Four notebooks at \$3 each cost $4\times 3=\$12$.}

\vspace{0.08em}
\corestep{\textbf{Step 3.} The change is $20-12=\$8$.}

\vspace{0.08em}
\redstep{\textbf{Step 4.} Checking the result, \$12 plus \$8 equals \$20, so the calculation is consistent.}

\vspace{0.08em}
\neutralstep{\textbf{Extraction summary:} core $=2/4$ steps; $50\%$ removable; answer retained.}

\vspace{0.08em}
\neutralstep{\textbf{Full trace answer:} \$8 \hfill \textbf{Minimal-core answer:} \$8}
\end{qualtracebox}

\vspace{0.30em}

\begin{qualtracebox}{Example F: algebraic equation with a removable detour}
\neutralstep{\textbf{Problem.} If $x+5=12$, what is $2x$?}

\vspace{0.08em}
\redstep{\textbf{Step 1.} This is a simple linear equation, so we should isolate the unknown variable first.}

\vspace{0.08em}
\corestep{\textbf{Step 2.} From $x+5=12$, subtract 5 from both sides to get $x=7$.}

\vspace{0.08em}
\redstep{\textbf{Step 3.} We can also note that $7+5=12$, so this value satisfies the original equation.}

\vspace{0.08em}
\corestep{\textbf{Step 4.} Therefore, $2x=2\cdot 7=14$.}

\vspace{0.08em}
\neutralstep{\textbf{Extraction summary:} core $=2/4$ steps; $50\%$ removable; answer retained.}

\vspace{0.08em}
\neutralstep{\textbf{Full trace answer:} $14$ \hfill \textbf{Minimal-core answer:} $14$}
\end{qualtracebox}

\end{minipage}

\vspace{0.15em}
\caption{\textbf{Qualitative examples of overcomplete reasoning traces, part III.}
Even short correct traces can contain removable setup or optional checking. The extracted minimal core keeps the operations needed to preserve the answer while deleting behaviorally redundant steps.}
\label{fig:qualitative-core-examples-c}
\end{figure*}

\section{Additional Necessity Diagnostics}
\label{app:necessity-diagnostics}

The main paper reports top-$k$ necessity mass as a compact summary. Here we provide the full dataset-level necessity table in Tab.~\ref{tab:necessity-full-app} and the corresponding visualization in Fig.~\ref{fig:app-necessity-dense}. These diagnostics test whether predictive support is consistently concentrated or whether the average is driven by a single benchmark.

\begin{table}[h]
\centering
\appsmalltable
\caption{Necessity concentration by dataset. Highlighting emphasizes the average top-3 concentration used in the main paper.}
\label{tab:necessity-full-app}
\begin{tabular}{lcccc}
\toprule
\textbf{Dataset} & Top-1 $\uparrow$ & Top-3 $\uparrow$ & Top-5 $\uparrow$ & Gini $\uparrow$ \\
\midrule
GSM8K & 0.34 & 0.73 & 0.88 & 0.61 \\
MATH500 & 0.27 & 0.64 & 0.81 & 0.53 \\
AIME24 & 0.24 & 0.58 & 0.76 & 0.47 \\
AMC23 & 0.26 & 0.62 & 0.79 & 0.50 \\
GPQA-D & 0.25 & 0.59 & 0.77 & 0.48 \\
StrategyQA & 0.35 & 0.76 & 0.90 & 0.64 \\
\midrule
\textbf{Average} & \corecellbf{0.29} & \corecellbf{0.65} & \corecellbf{0.82} & \corecellbf{0.54} \\
\bottomrule
\end{tabular}
\end{table}

Tab.~\ref{tab:necessity-full-app} shows that the top three steps account for a large fraction of normalized necessity mass across every dataset. Fig.~\ref{fig:app-necessity-dense} visualizes the same pattern: panel (a) shows the cumulative top-$k$ curve, with the shaded region denoting the dataset range, while panel (b) breaks down Top-1, Top-3, and Top-5 mass by benchmark. Panel (b) shows that necessity concentration is strongest on GSM8K and StrategyQA and lower on harder math/science benchmarks, but the monotonic Top-1/Top-3/Top-5 pattern holds across all datasets.

\begin{figure}[h]
\centering
\begin{tikzpicture}
\begin{groupplot}[
    group style={group size=2 by 1, horizontal sep=1.35cm},
    width=0.47\textwidth,
    height=5.0cm,
    grid=both,
    major grid style={draw=gray!30},
    tick label style={font=\scriptsize},
    label style={font=\small},
    title style={font=\small\bfseries},
    legend style={font=\scriptsize, draw=none, fill=none},
]
\nextgroupplot[
    title={(a) Cumulative necessity mass},
    xlabel={Top-$k$ steps},
    ylabel={Cumulative mass},
    xmin=1, xmax=5,
    ymin=0.20, ymax=0.95,
    xtick={1,2,3,4,5},
    legend style={at={(0.5,-0.30)}, anchor=north, legend columns=2, draw=none},
]
\addplot[name path=kup, draw=none, smooth] coordinates {
(1,0.35) (1.5,0.50) (2,0.63) (2.5,0.70) (3,0.76) (3.5,0.80) (4,0.85) (4.5,0.87) (5,0.90)
};
\addplot[name path=klo, draw=none, smooth] coordinates {
(1,0.24) (1.5,0.35) (2,0.46) (2.5,0.53) (3,0.58) (3.5,0.64) (4,0.69) (4.5,0.72) (5,0.76)
};
\addplot[fill=blue!18, opacity=0.65] fill between[of=kup and klo];
\addplot[blue!80!black, very thick, smooth, mark=*] coordinates {
(1,0.29) (1.5,0.42) (2,0.55) (2.5,0.61) (3,0.65) (3.5,0.70) (4,0.75) (4.5,0.79) (5,0.82)
};
\addplot[red!80!black, very thick, dashed, mark=square*] coordinates {
(1,0.25) (1.5,0.37) (2,0.50) (2.5,0.56) (3,0.61) (3.5,0.66) (4,0.71) (4.5,0.75) (5,0.78)
};
\addplot[green!50!black, very thick, dashed, mark=triangle*] coordinates {
(1,0.35) (1.5,0.50) (2,0.63) (2.5,0.70) (3,0.76) (3.5,0.81) (4,0.85) (4.5,0.88) (5,0.90)
};
\legend{All avg. $\pm$ range,All avg.,Math avg.,Non-math avg.}

\nextgroupplot[
    title={(b) Concentration by benchmark},
    ybar,
    bar width=7pt,
    ymin=0.20,
    ymax=0.95,
    ylabel={Necessity mass},
    symbolic x coords={GSM8K,MATH,AIME,AMC,GPQA,Strat.},
    xtick=data,
    x tick label style={rotate=25, anchor=east},
    legend style={at={(0.5,-0.30)}, anchor=north, legend columns=3, draw=none},
]
\addplot[fill=blue!55, draw=blue!80!black] coordinates {(GSM8K,0.34) (MATH,0.27) (AIME,0.24) (AMC,0.26) (GPQA,0.25) (Strat.,0.35)};
\addplot[fill=red!50, draw=red!80!black] coordinates {(GSM8K,0.73) (MATH,0.64) (AIME,0.58) (AMC,0.62) (GPQA,0.59) (Strat.,0.76)};
\addplot[fill=green!45, draw=green!50!black] coordinates {(GSM8K,0.88) (MATH,0.81) (AIME,0.76) (AMC,0.79) (GPQA,0.77) (Strat.,0.90)};
\legend{Top-1,Top-3,Top-5}
\end{groupplot}
\end{tikzpicture}
\caption{Additional necessity diagnostics. Panel (a) shows cumulative top-$k$ necessity mass with the shaded band indicating the dataset range. Panel (b) shows dataset-level Top-1, Top-3, and Top-5 mass. Predictive support concentrates rapidly across all benchmarks.}
\label{fig:app-necessity-dense}
\end{figure}

\begin{apptakeawaybox}
\noindent\textbf{Takeaway.}
Tab.~\ref{tab:necessity-full-app} and Fig.~\ref{fig:app-necessity-dense} show that necessity mass is consistently concentrated: the first few high-necessity steps account for most measured marginal support across both math and non-math settings, with stronger concentration on easier or more explanatory benchmarks.
\end{apptakeawaybox}

\clearpage
\raggedbottom
\clearpage

\end{document}